\newtheorem{theorem}{Theorem}
\definecolor{bcolor}{RGB}{50, 125, 50}
\definecolor{bcolor}{RGB}{50, 125, 50}
\ifwacvfinal\pagestyle{empty}\fi
\begin{document}

\title{Generating Positive Bounding Boxes for Balanced Training of Object Detectors}

\author{Kemal Oksuz, Baris Can Cam,  Emre Akbas$^*$, Sinan Kalkan\thanks{Equal contribution for senior authorship.}\\
Department of Computer Engineering \\
Middle East Technical University, Ankara, Turkey\\
{\tt\small \{kemal.oksuz, can.cam, eakbas, skalkan\}@metu.edu.tr}
}

\maketitle
\ifwacvfinal\thispagestyle{empty}\fi


\begin{abstract}
Two-stage deep object detectors generate a set of regions-of-interest (RoI) in the first stage, then, in the second stage, identify objects among the proposed RoIs that sufficiently overlap with a ground truth (GT) box. The second stage is known to suffer from a bias towards RoIs that have low intersection-over-union (IoU) with the associated GT boxes. To address this issue, we first propose a sampling method to generate bounding boxes (BB) that overlap with a given reference box more than a given IoU threshold. Then, we use this BB generation method to develop a positive RoI (pRoI) generator that produces RoIs following any desired spatial or IoU distribution, for the second-stage. We show that our pRoI generator is able to simulate other sampling methods for positive examples such as hard example mining and prime sampling. Using our generator as an analysis tool, we show that (i) IoU imbalance has an adverse effect on performance, (ii) hard positive example mining improves the performance only for certain input IoU distributions, and (iii) the imbalance among the foreground classes has an adverse effect on performance and that it can be alleviated at the batch level. Finally, we train Faster R-CNN using our pRoI generator and, compared to conventional training, obtain better or on-par performance for low IoUs and significant improvements when trained for higher IoUs for Pascal VOC and MS COCO datasets. The code is available at: \url{https://github.com/kemaloksuz/BoundingBoxGenerator}.
\end{abstract}

\section{Introduction}
\label{sec:intro}
\begin{figure}
\subfigure[Bounding Box Generation]{
    \centerline{
    \includegraphics[width=0.4\textwidth]{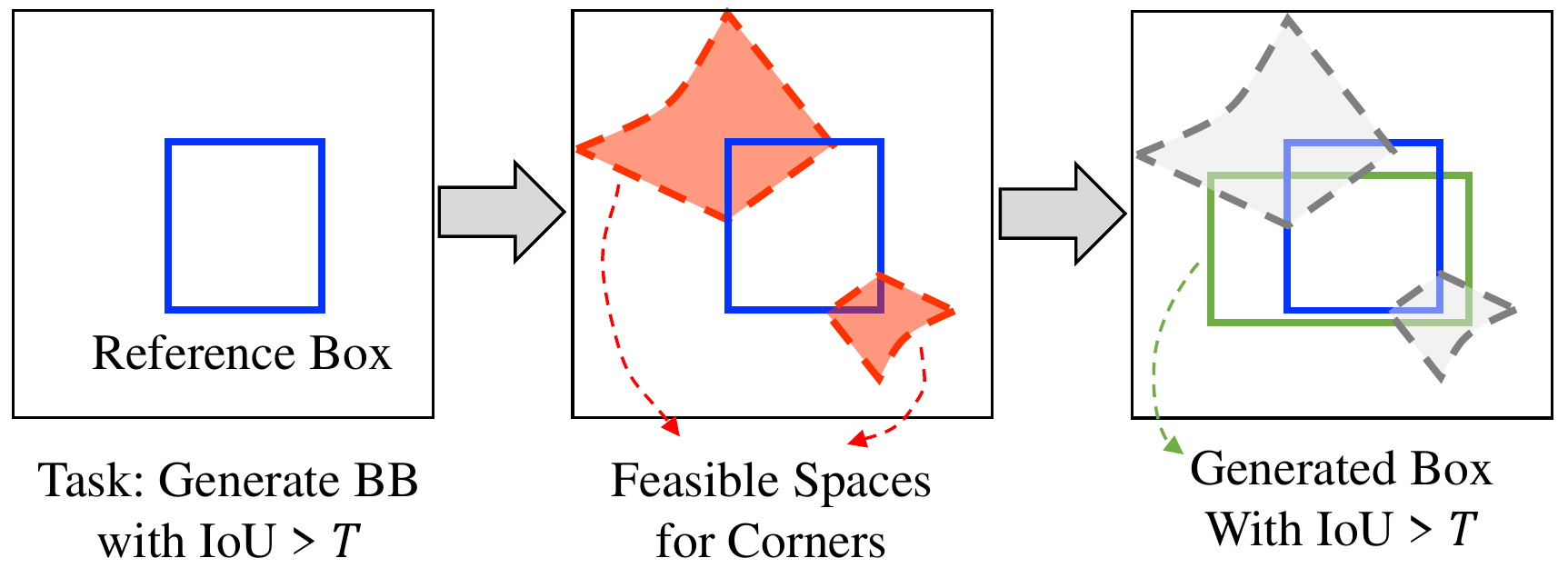}
    }
    \label{fig:BBteaser}
    }
\subfigure[Generating Bounding Boxes for Training an Object Detector]{
    \centerline{\includegraphics[width=0.46\textwidth]{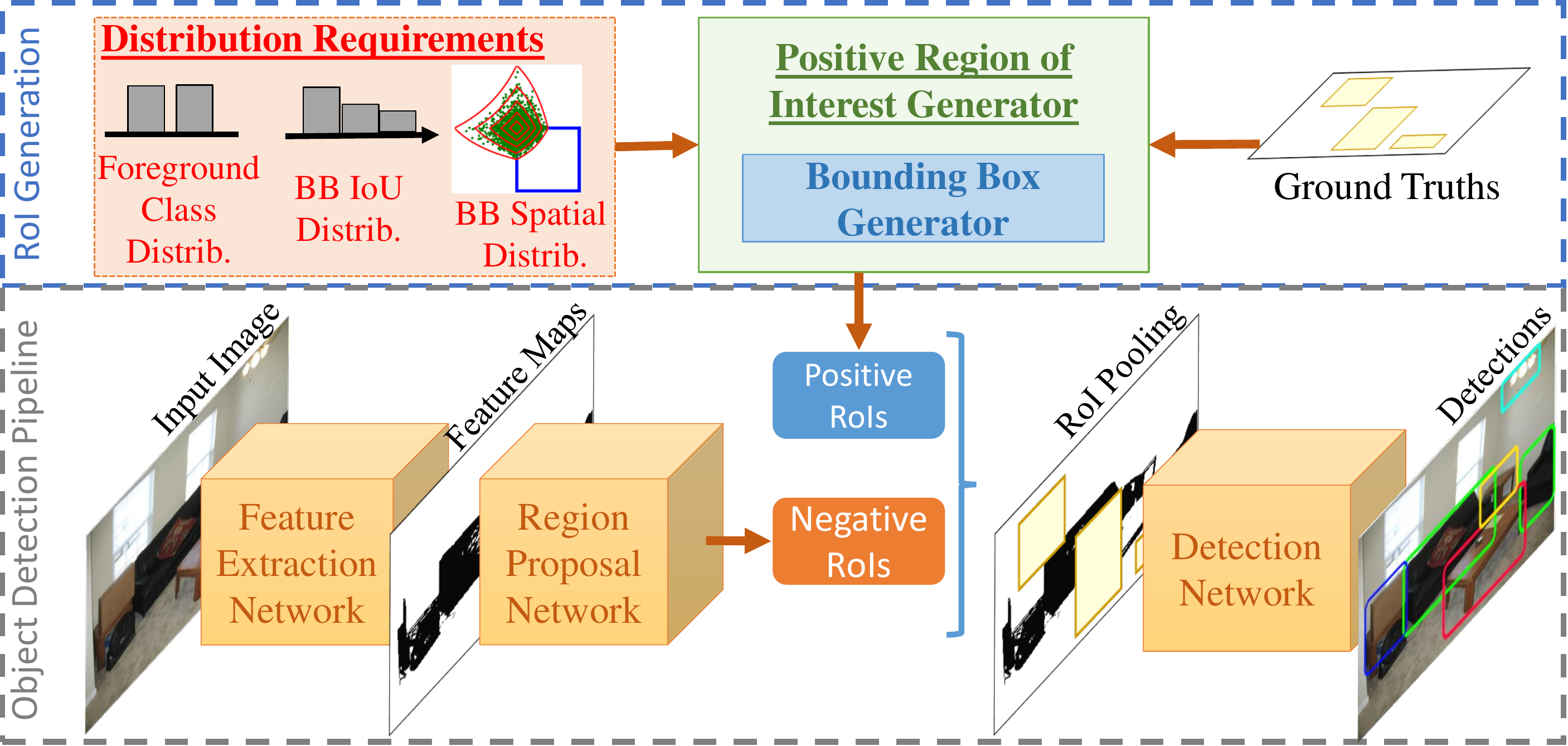}} \label{fig:teaser}
    }
\caption{\textbf{(a)} \textit{An illustration of Bounding Box (BB) Generation.} Given a reference box (in blue) and an IoU threshold $T$, a BB having at least $T$ IoU is generated (drawn in green). \textbf{(b)} \textit{An illustration of training an object detector with positive region-of-interests.} Given distribution requirements on foreground classes and BBs, we generate positive RoIs using the BB generator (Fig \ref{fig:BBteaser}). Negative RoIs are still generated by the region proposal network.}
\end{figure}



An important challenge in object detection is class imbalance \cite{PrimeSample,gradientharmonizing,FocalLoss,LibraRCNN,OHEM}: even from a single image, an infinite number of negative examples can be sampled, in contrast to only a limited set of positive RoIs. Naturally, this leads to significant imbalance between negative and positive RoIs. Class imbalance also exists within foreground classes.

A prominent solution to the foreground-background class imbalance is to have two stages \cite{RFCN,FastRCNN,FasterRCNN}: The first stage estimates regions (i.e., region-of-interests -- RoIs) that are likely to contain objects, significantly discarding background samples, and the second-stage classifies these regions into objects, and also fine-tunes the coordinates of the bounding boxes.  Other solutions generally employ sampling with hard constraints (e.g., online hard example mining \cite{OHEM}, Libra RCNN \cite{LibraRCNN}) or soft constraints (e.g., focal loss \cite{FocalLoss}, harmonizing gradients \cite{gradientharmonizing}). 

The foreground-foreground class imbalance problem, i.e., the imbalance in the number of examples pertaining to different positive classes at the image, dataset or mini-batch levels, has not attracted as much attention. In addition, the intersection-over-union (IoU) distribution of the RoIs generated by the region proposal network (RPN) \cite{FasterRCNN} is imbalanced \cite{CascadeRCNN}, which biases the BB regressor in favor of the IoU that the distribution is skewed towards. We call this imbalance problem as \textit{BB IoU imbalance}. Addressing these problems requires a careful analysis of the positive RoIs. 

In this paper, we analyze and address foreground-foreground class imbalance and BB IoU imbalance by actively generating BBs. We first propose the ``BB generator", a method that can generate an arbitrary BB overlapping with a reference box with an IoU larger than a given threshold (Figure \ref{fig:BBteaser}). Using the BB generator, we develop a positive RoI (pRoI) generator that can produce RoIs conforming to desired BB IoU and spatial distributions (Figure \ref{fig:teaser}). Considering that there is a correlation between the hardness of an example and its IoU \cite{LibraRCNN}, the pRoI generator is able to \textit{generate} (rather than sample) not only positive or negative samples, but also samples with any desired property such as hard examples \cite{OHEM} or prime samples \cite{PrimeSample}. 

We use our pRoI generator to perform several analyses and improvements. Specifically, we (i) show that BB IoU and foreground class distributions affect performance, (ii)  make a comparative analysis for RPN RoIs and (iii) improve the performance of Faster RCNN for IoU intervals where RPN is not able to generate enough samples. 

Finally, we devise an online, foreground-balanced (OFB) sampling method which considers the imbalance among the foreground classes dynamically within a training batch based on multinomial sampling. 

Overall, our contributions can be summarized as follows:

\noindent\textit{1. Generators:} (i) A BB generator to generate BBs for a given IoU threshold and (ii) a positive RoI generator to generate RoIs with desired foreground class, IoU and spatial distributions. 

\noindent\textit{2. Imbalance Problems and Analysis:} We introduce the BB imbalance problem. Using our pRoI generator, we show that these imbalance problems and the foreground-foreground class imbalance within a training batch affect the performance of the object detectors. We also provide an analysis of RPN RoIs and show that the effect of the hard examples depends on the IoU distribution of the BBs. 

\noindent\textit{3. Practical Improvements:} We train a detection network using our pRoI generator, which increases the amount and the diversity of the positive examples especially for the larger IoUs, and show that the performance improves compared to the standard training (e.g. for $IoU=0.8$, $\mathrm{mAP@0.8}$ improves by $10.9\%$ for Pascal VOC). We also train the conventional detection pipeline by using the proposed OFB sampling, and improve the performance. 

\section{Related Work}
\label{sec:RelatedWork}

{\noindent}\textbf{Deep Object Detectors:} We can group deep object detectors into two: One-stage methods and two-stage methods. While one-stage methods \cite{DSSD, FocalLoss, SSD, YOLO, YOLObetter} predict the object categories and their bounding boxes directly from anchors, two-stage methods \cite{RFCN, FastRCNN, RCNN, FasterRCNN} first estimate a set of RoIs from anchors and then predict objects from these RoIs in the second stage. Both approaches use a deep feature extractor \cite{Resnet, Resnext}, optionally followed by steps like feature pyramid networks \cite{NASFPN, PyramidReconfiguration, FeaturePyramidNetwork, PANet}.

Our BB sampling approach is more suitable for the second stage of the two-stage methods since one-stage detectors have structural constraints owing to the fact that each output of a one-stage detector corresponds to a predefined anchor having fixed location, shape and scale. For this reason, an additional module is required to employ our generator. However, BB imbalance and OFB sampling are relevant for any object detection pipeline since any object detector needs to deal with bounding boxes even if they are estimated or fixed (in the case of anchors).  

{\noindent}\textbf{Class Imbalance in Object Detection:} Following the work by Oksuz et al. \cite{imbalance}, we categorize the class imbalance problem for the deep object detectors into two: foreground-background class imbalance and foreground-foreground class imbalance.

Foreground-background class imbalance has attracted more attention from the community with \textit{hard sampling}, \textit{soft sampling} and \textit{generative approaches}. In hard sampling, certain samples are shown more to the network to address imbalance. This can be performed e.g. via random sampling \cite{RFCN, FasterRCNN}, or by relying on ``sample usefulness'' heuristics as in hard-example mining \cite{SSD, LibraRCNN, OHEM} and prime sampling \cite{PrimeSample}. Hard-example mining methods usually assume that examples with higher loss are more difficult to learn, and therefore, they train a network more with such examples. This approach is adopted for negative samples in SSD \cite{SSD}, while a more systematic approach considering both the positive and negative samples is proposed in online hard example mining (OHEM -- \cite{OHEM}). An alternative hardness definition was proposed in Libra R-CNN \cite{LibraRCNN} based on a sample's IoU, and a solution was proposed using hard example mining using BB IoUs without computing the loss for the entire set. A recent interesting method, ``prime sampling" \cite{PrimeSample}, asserts that positive samples with higher IoUs are more representative and proposed ranking the positive samples based on its IoU with the ground truth, while still showing that hard example mining for the negative class works well. BB IoU imbalance is addressed by Cascade R-CNN \cite{CascadeRCNN}  by employing cascaded detectors in such a way that a later-stage detector is trained by a distribution skewed towards higher IoU. However, this, requiring multiple detectors being trained, is computationally prohibitive.

In \textit{soft sampling}, a weight is assigned to each sample rather than performing a discrete (hard) selection of samples. Prominent examples include focal loss \cite{FocalLoss}, which promotes hard examples; prime sampling \cite{PrimeSample}, which assigns more weight to examples with higher IoUs; and finally gradient harmonizing mechanism \cite{gradientharmonizing}, which assigns lower weights to easy negatives and suppresses the effect of the outliers. 

The \textit{generative methods} address imbalance with a different perspective by introducing generated samples. Example approaches include generating hard examples with various deformations and occlusion \cite{AFastRCNN} and generating synthetic examples \cite{GAN}.

Foreground-foreground class imbalance is critical as well. Kuznetsova et al. \cite{OpenImages} showed that object detection datasets are highly imbalanced also for foreground classes. The only method to consider the problem at the dataset level handcrafts a similarity measure, and based on the measure clusters the classes to have a more balanced training \cite{FgClassImbalance}. In the classification domain where there is no background class, this imbalance is studied more \cite{ImbalanceBook,ImbalanceSurvey1} by, e.g.,  performing class-aware sampling \cite{relay2016}. However, these methods are not directly applicable for two-stage object detectors because the second stage's input is very dynamic since it depends on RoIs estimated by the first stage. Despite this difference, class-aware sampling is said to be adopted by \cite{PANet}, however no comparison is presented for balanced and imbalanced training from the object detection perspective.


The ideas in this paper are relevant for both foreground-background and foreground-foreground class imbalance. One can generate any number of positive RoIs to address the foreground-background imbalance, and the generated set can also be chosen equally from each class to address the foreground-foreground imbalance. Among the three types of  methods mentioned above, we classify our approach as a generative method. Since the end-to-end training pipeline is not disrupted (see Figure \ref{fig:teaser}), any hard sampling method \cite{OHEM,LibraRCNN} can also be simulated. In addition, we directly address foreground-foreground class imbalance by online foreground balanced (OFB) sampling. Its main difference from the previously proposed class-aware sampling \cite{relay2016} is that while they use a static dataset, our OFB sampling is able to handle the dynamic nature of the RoIs (i.e. the batch depends on the sampled RoIs at each iteration) owing to the proposal network. 
\section{The Generators}
In this section, we describe the methods for generating bounding boxes and balanced positive RoIs.

\subsection{Definitions and Notation}
Let $B=[x_1,y_1,x_2,y_2]$ denote a ground-truth box with top-left corner $\mathrm{TL}(B)=(x_1,y_1)$ and bottom-right corner $\mathrm{BR}(B)=(x_2,y_2)$ satisfying $x_2>x_1$ and $y_2>y_1$. The area of $B$ is simply defined as:
\begin{equation}
\mathrm{A}(B)=(x_2-x_1)\times (y_2-y_1),    
\end{equation}
and the area of the intersection between $B$ and $\bar{B}$ is:
{
\begin{eqnarray}
 \mathrm{I}(B,\bar{B})& = & (\min{(\Bar{x_2},x_2)}-\max{(\Bar{x_1},x_1)}) \times  \\ \nonumber 
  & &  \quad (\min{(\Bar{y_2},y_2)}-\max{(\Bar{y_1},y_1)} .   
\end{eqnarray}
}
Based on this notation, $\mathrm{IoU}(B,\Bar{B})$ can be easily defined as:
\begin{align}
\label{eq:IoU}
\mathrm{IoU}(B,\Bar{B})=\frac{\mathrm{I}(B,\Bar{B})}{\mathrm{A}(B)+\mathrm{A}(\Bar{B})-\mathrm{I}(B,\Bar{B})}.
\end{align}
Finally, we note two useful properties of the IoU function: ({\bf Theorem 1}) $\mathrm{IoU}(B,\Bar{B})$ is scale-invariant, and ({\bf Theorem 2}) $\mathrm{IoU}(B,\Bar{B})$ is translation-invariant (see Appendix for proofs). These theorems allow us to shift and scale the input BBs to a reference box during BB generation and then shift and scale them back to their original aspect ratio and location..

\subsection{Bounding Box Generator}
\label{sec:BBSampler}

\begin{figure}
\centering
\begin{tabular}{cc}
\includegraphics[width=0.2\textwidth]{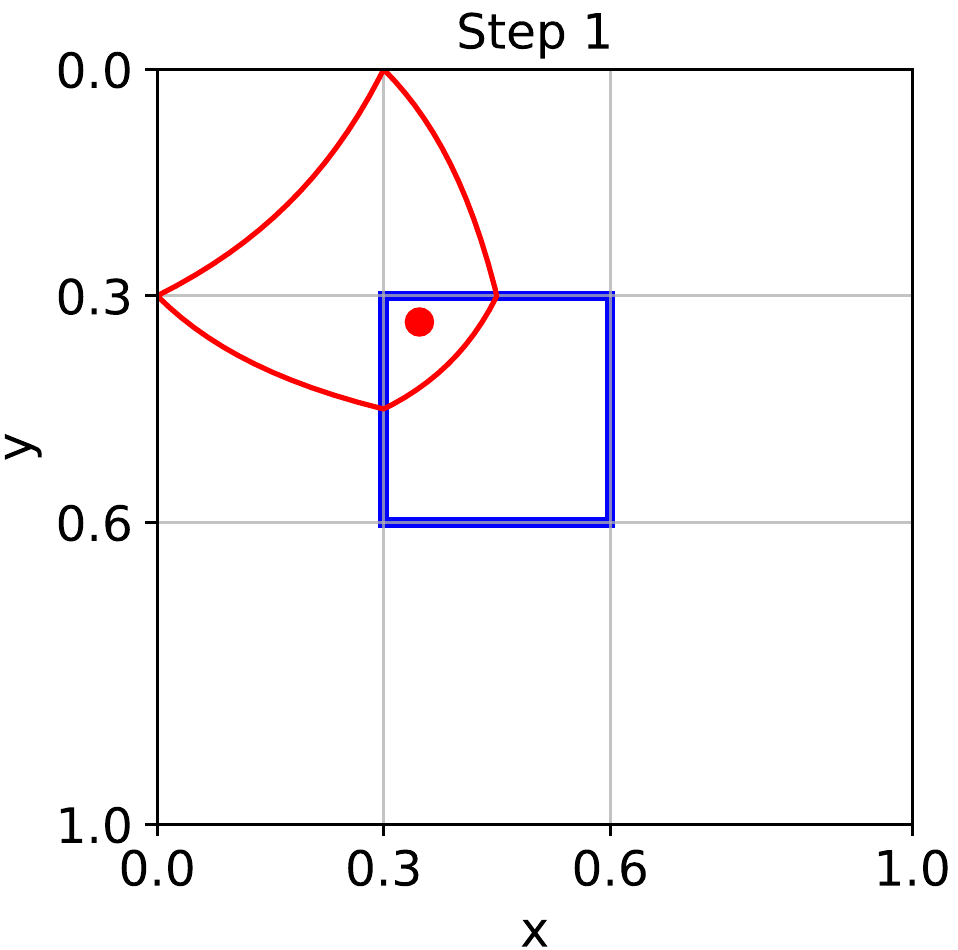} & 
\includegraphics[width=0.2\textwidth]{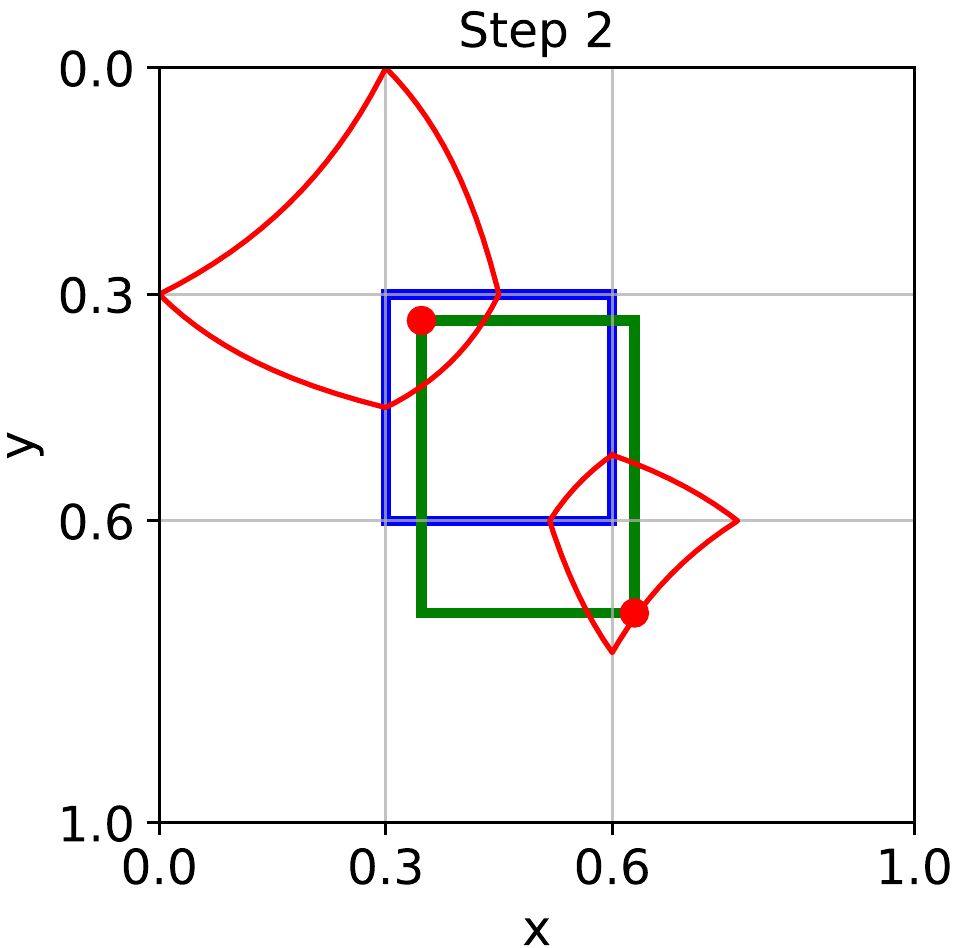} \\
\scriptsize (a) &\scriptsize (b) \\
\end{tabular}
\caption{(a,b) Applying Algorithm \ref{alg:IoUSample} on the blue BB ($B$) with $T=0.5$. Red polygons denote boundaries for top-left and bottom-right points that can be sampled with an IoU larger than $T=0.5$. Red dots are sampled points, and green box is the generated box ($\bar{B}$) with $IoU=0.5071$.}
\label{fig:Algorithm1}
\end{figure}

\begin{algorithm}
\caption{Bounding Box Generator. See Section \ref{sec:BBSampler} and the Appendix for the definitions of the functions. \label{alg:IoUSample}}
\begin{algorithmic}[1]
\footnotesize
\Procedure{GenerateBB}{$B, T$}
\State \textcolor{lightgray}{\# Step-1: Find top-left corner}
\State $TLPoly \gets \mathrm{findTLFeasibleSpace}(B, T)$ 
\State $\mathrm{TL}(\Bar{B}) \gets \mathrm{samplePolygon}(TLPoly)$ 
\State \textcolor{lightgray}{\# Step-2: Find bottom-right corner}
\State $ BRPoly \gets \mathrm{findBRFeasibleSpace}(B, T, \mathrm{TL}(\Bar{B}))$ 
\State $\mathrm{BR}(\Bar{B}) \gets \mathrm{samplePolygon}(BRPoly)$ 
\State \textbf{return} $[\mathrm{TL}(\Bar{B}),\mathrm{BR}(\Bar{B})]$ 
\EndProcedure
\end{algorithmic}
\end{algorithm}

Given a reference box $B$ and a threshold $T$, the goal of the bounding box (BB) generator is to determine a new box $\Bar{B}=[\Bar{x_1},\Bar{y_1},\Bar{x_2},\Bar{y_2}]$ such that $\mathrm{IoU}(B,\Bar{B}) \geq T$. To generate such a box, we propose a 2-step algorithm presented in Algorithm \ref{alg:IoUSample} and illustrated in Fig. \ref{fig:Algorithm1}. The first step (lines 3-4) finds the polygon\footnote{Note that the shape is not strictly a polygon; however, we approximate it as one at regular small intervals, and therefore, we call it a polygon for the sake of simplicity.} that computes the feasible space for $\mathrm{TL}(\Bar{B})=(\Bar{x_1},\Bar{y_1})$, which satisfies the desired IoU, and samples a point in this polygon. The second step (lines 6-7) takes into account the sampled $\mathrm{TL}(\Bar{B})$ and, similar to Step 1, determines a feasible space for bottom-right corner, then, samples   $\mathrm{BR}(\Bar{B})$. 

\begin{figure}{c}
\centering
\includegraphics[width=0.35\textwidth]{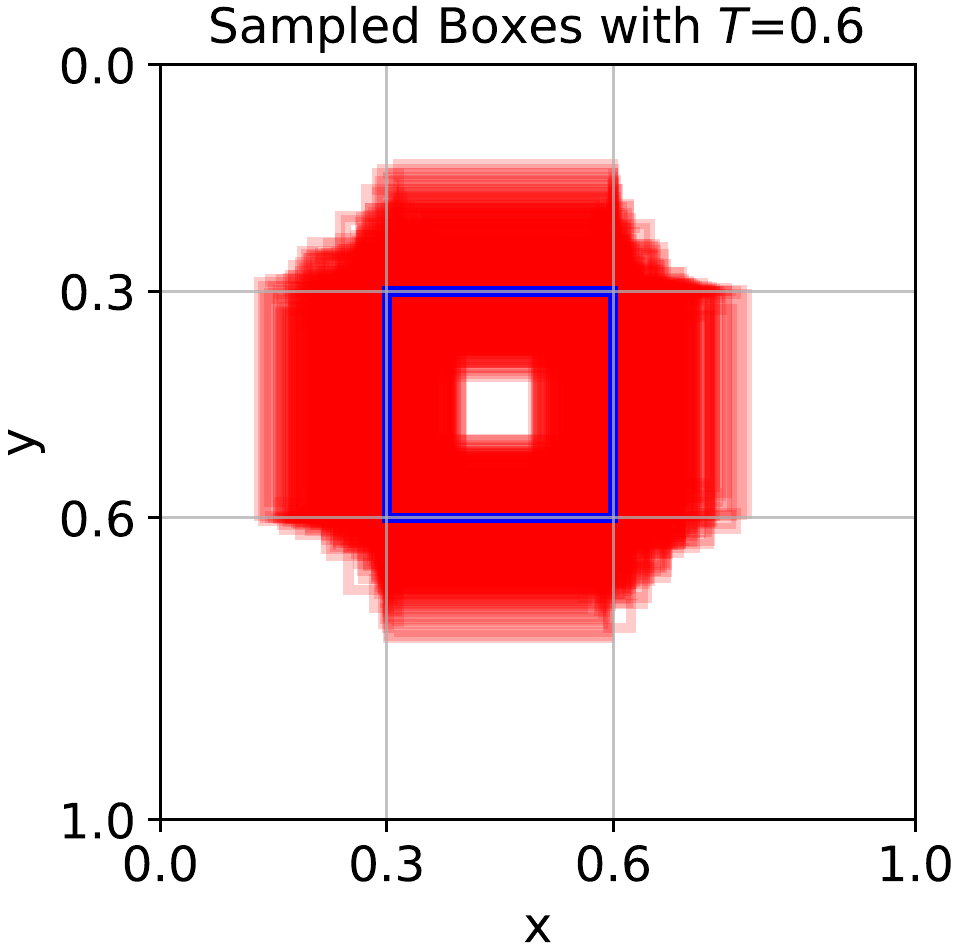} 
\caption{$1K$ generated boxes (shown with red) by Algorithm \ref{alg:IoUSample} for reference box drawn in blue ($B$) and IoU threshold $T=0.6$.}
\label{fig:ExBoxes}
\end{figure}
This order leads to a non-isotropic distribution with respect to the reference box. To make it isotropic, we can also sample in the reverse order: i.e. sample BR first then TL. We then randomly choose the order, before sampling. Fig. \ref{fig:ExBoxes} superimposes $1000$ generated boxes with $T=0.6$.

The following two sections discuss how the feasible space is computed (i.e. $\mathrm{findTLFeasibleSpace}(B, T)$) and how a point can be sampled within a polygon (i.e. $\mathrm{samplePolygon}(TLPoly)$). We refer the interested reader to check the Appendix for $\mathrm{BR}(\bar{B})$.

\subsubsection{Determining Feasible Space for the Desired IoU}
\label{subsub:TLSpace}


$\mathrm{findTLFeasibleSpace}(B, T)$ is the function determining the feasible set of points that can be the top left point of a box ensuring the desired IoU. In order to find the set of these feasible points (i.e. $\mathrm{TL}(\bar{B})$) that satisfy Eq. \ref{eq:IoU}, we assume that $\mathrm{BR}(\bar{B})=\mathrm{BR}(B)$ and manipulate Eq. \ref{eq:IoU}, otherwise, some feasible points are excluded in the feasible top left space. Even though $\mathrm{BR}(\bar{B})$ is fixed, there are still two unknown variables $\bar{x_1}$ and $\bar{y_1}$. That's why, we first bound one of these two variables and then find the value of the unbounded variable by moving within the limits of the bounded variable with some precision (we use $0.0001$ as precision). Since the definition of the $\mathrm{IoU}(B,\bar{B})$ is different in each of the four regions depicted in Fig. \ref{fig:Supp}(a) due to the $\max$ and $\min$ operations, an equation is to be derived for each region. 

\begin{figure}
\centering
\begin{tabular}{cc}
\includegraphics[width=0.2\textwidth]{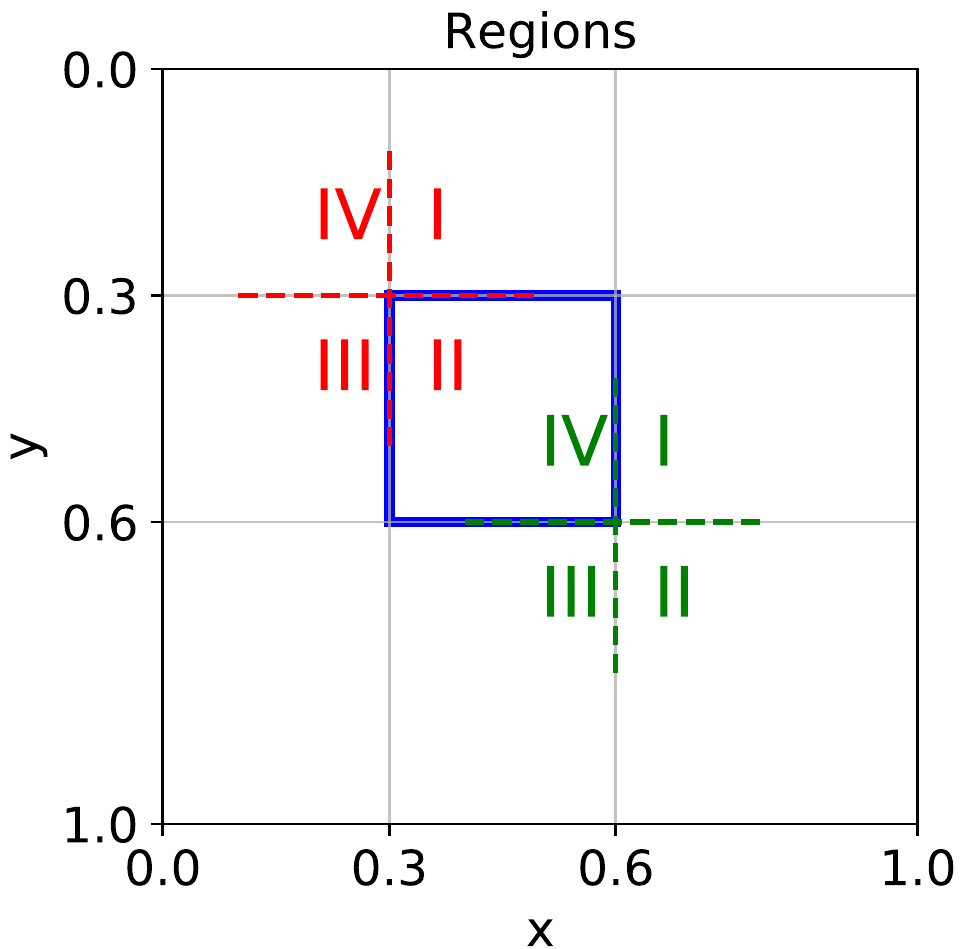} & 
\includegraphics[width=0.2\textwidth]{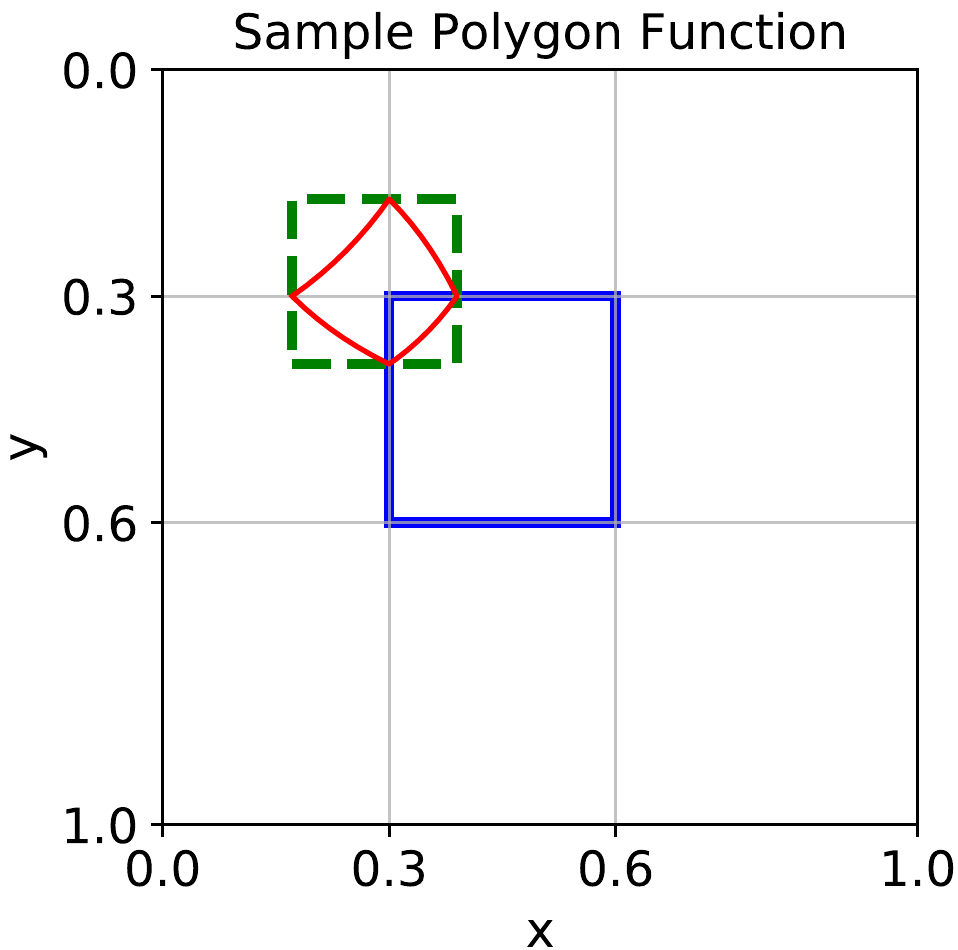} \\
\scriptsize (a) & \scriptsize (b)  \\
\end{tabular}
\caption{\textbf{(a)} The regions around $\mathrm{TL}(B)$ and $\mathrm{BR}(B)$ are splitted into four each. Red and green dashed lines split the top left and bottom right regions respectively. The numbers label the splitted regions.), \textbf{(b)} In the execution of the sample polygon function for $T=0.75$, green dashed box is the enclosing box for the TL space polygon.}
\label{fig:Supp}
\end{figure}

Denoting the minimum and maximum bounds of $\bar{x_1}$ in Region I by $x_{min}^I$ and $x_{max}^I$ respectively, we bound the values in $x$ axis. It is obvious that $x_{min}^I=x_1$ due to the boundary of Region I. To find $x_{max}^I$, we manipulate Eq. \ref{eq:IoU} by exploiting that $\Bar{y_1}=y_1$ for $x_{max}^I$, which yields:
\begin{align}
    x_{max}^I=x_2- (x_2-x_1)\times T.
\end{align}
Having determined the boundaries for $\Bar{x_1}$, now we derive a function that determines $\Bar{y_1}$ given $\Bar{x_1}$. Finally, moving within the bounds yields $\Bar{x_1},\Bar{y_1}$ pairs satisfying $\mathrm{IoU}(B,\bar{B})=T$ when $\mathrm{BR}(\bar{B})=\mathrm{BR}(B)$. In region I, note that $I(B,\bar{B})$ does not rely on $\Bar{y_1}$ (i.e. $I(B,\bar{B})=(x_2-\Bar{x_1})(y_2-y_1)$). Bringing these together, $\Bar{y_1}$ can be defined as (see Appendix for the entire derivation of $x_{max}^I$ and $\Bar{y_1}$):
\begin{align}
    \Bar{y_1}=y_2- \frac{ \frac{I(B,\Bar{B})}{T}+I(B,\Bar{B})-A(B)}{(x_2-\Bar{x_1})}.
\end{align}
Here, we only show the derivation steps for Region I and present the equations for all regions in Appendix. Combining the points in all these regions yields the polygon limiting feasible region with $IoU \geq T$.
\subsubsection{Controlling the Spatial Distribution of the Boxes}
\label{subsub:samplePolygon}
$\mathrm{samplePolygon}(TLPoly)$ function determines the BB spatial distribution. We follow rejection sampling \cite{TutorialMC} in such a way that a point is proposed by the proposal distribution until it hits the inside of the polygon. Accordingly, the proposal distribution determines the BB spatial distribution. Fig. \ref{fig:Supp}(b) presents an example for spatial uniform distribution for the top-left space polygon with $T=0.75$. We sample a point in the rectangle uniformly, which corresponds basically to generating two uniform numbers within a range. If the point is in the polygon, then it is accepted, else a new point is proposed until it is inside the polygon. Note that different proposal distributions lead to different BB spatial distributions.

\subsection{pRoI Generator: Training by Generated BBs}
\label{sec:PosRoISampler}

This section provides an application of our BB generator for generating positive RoIs for training a two-stage object detector. By applying our BB generator to the ground-truth boxes, we can generate positive RoIs with desired characteristics. This enables us to (i) analyze how the performance of Faster R-CNN is affected by the properties of the positive RoIs and (ii) improve the performance for IoU intervals where RPN is not able to generate enough samples. 


\begin{algorithm}[H]
\caption{Positive RoI Generator. See Section \ref{sec:PosRoISampler} and the Appendix for the definitions of functions $\mathrm{fgBalancedRoIAlloc}$ and $\mathrm{genRoIs}$.}
\label{alg:SampleBoxes}
\footnotesize
\begin{algorithmic}[1]
\Procedure{GeneratepRoI}{$GTs, \psi_{IoU}, W_{IoU},RoINum$}
\State $perGtRoI = \mathrm{fgBalancedRoIAlloc}(GTs, RoINum)$
\State $RoIs = \mathrm{genRoIs}(GTs,perGtRoI,\psi_{IoU},W_{IoU},RoINum)$
\State \textbf{return} $RoIs$
\EndProcedure
\end{algorithmic}
\end{algorithm}

The method, ``Positive RoI Generator'' (pRoI Generator), described in Algorithm \ref{alg:SampleBoxes}, can control several different characteristics of the set of positive RoIs. $\mathrm{fgBalancedRoIAlloc}()$ first divides $RoINum$ by the number of different classes in the given ground truth set, $GTs$, to determine the allocated box number per class, and then shares this value among each example of the same class equally. As a result, $\mathrm{fgBalancedRoIAlloc}()$ determines the number of boxes to be generated for each ground truth box in $GTs$. Secondly, given the allocated number of boxes for each ground truth, $\mathrm{genRoIs}()$ iteratively uses BB generator as a subroutine to provide a set of $RoINum$ RoIs. In this step the BB IoU distribution requirement is determined by the inputs $\psi_{IoU}$, the base of the IoU bins and the weight of the each bin denoted by $W_{IoU}$. $W_{IoU}$ is basically a multinomial distribution over the bins determined by $\psi_{IoU}$. An important benefit of pRoI generator is that training with the generated RoIs has no impact on the gradient flow for the training process (see Appendix). At each training iteration, RPN generates a set of RoIs among which we discard the positive ones and use the positive RoIs generated by the proposed method (Fig. \ref{fig:teaser}). Using our pRoI generator, we can address the imbalance problems regarding RoIs at three different levels:

\noindent\textbf{(1) Foreground-foreground class imbalance}, which occurs when a dataset or mini-batch (or batch) contains different numbers of positive examples from different classes. To illustrate on a batch, an image (used as a batch) from PASCAL dataset \cite{PASCAL} includes 4 bottles, 2 persons, 2 dining tables and 1 chair. In such a case, having equal number of RoIs per instance may lead the model to be biased in favor of the bottle class while ignoring the chair class. In our pRoI Generator, $\mathrm{fgBalancedRoIAlloc}()$ function allocates the same number of RoIs for each class within the batch.

\noindent\textbf{(2) BB IoU imbalance}, which occurs when the positive RoIs have a skewed IoU distribution (Fig. \ref{fig:IoUDistribution}). It has been shown that the hardness of a RoI is related to its IoU \cite{LibraRCNN} and also the regressor overfits to RoIs which has IoU around $0.5$ when the distribution of the RPN proposals is concentrated towards $0.5$ \cite{CascadeRCNN}. Thus, these recent findings imply that the IoU distribution has an important effect on training. As aforementioned, $\mathrm{genRoIs}()$ is able to control the IoU distribution of the BBs.

\noindent\textbf{(3) BB spatial imbalance}, which occurs when the BBs intersect significantly and a diverse set of examples can not be provided to the detection network. This level of imbalance is controlled in our pRoI generator in the subroutine BB generator as discussed in Section \ref{subsub:samplePolygon}.

\section{Experimental Setup}
{\noindent}\textbf{Dataset and Implementation Details:} We evaluate our generative methods on Faster R-CNN in two different settings: (i) on  Pascal VOC 2007 \cite{PASCAL} with backbone ResNet-101 following the implementation and training in \cite{jjfaster2rcnn} with batch size $1$ image on $1$ GPU, and (ii) on MS COCO \cite{COCO} with backbone ResNet-50 following the implementation and training in  \cite{MMdetection} with batch size $2$ images/GPU on $2$ GPUs. 


{\noindent}\textbf{Performance Measures:} We exhaustively search for the best mean-average-precision ($\mathrm{mAP}$) and mean-optimal-localization-precision-recall ($\mathrm{moLRP}$) error \cite{LRP} values over epochs and report them. moLRP is a recently introduced metric for object detection, which represents recall, precision and average tightness of the BBs. Note that $\mathrm{mAP}$ is a higher-is-better measure, while $\mathrm{moLRP}$ is an error metric and thus, it is a lower-is-better measure.

{\noindent}\textbf{RoI Sources:} In addition to RoIs output by RPN, we use the RoIs generated by our pRoI generator, with a given distribution,  during the analysis and training. The different distributions are obtained by controlling $W_{IoU}$ (see Appendix for the exact configurations of $W_{IoU}$) in Algorithm \ref{alg:SampleBoxes}. Unless otherwise stated, we set $\psi_{IoU}=[0.5,0.6,0.7,0.8,0.9]$ and $RoINum=32$. We train these RoI sources with and without foreground balanced sampling in order to see the effects of different imbalance problems on different RoI sources. The results are presented in Table \ref{table:BatchProperties}.

\section{Imbalance Problems and Analysis of RPN RoIs}
\label{sec:Analysis}

\begin{figure}{c}
\centering
\includegraphics[width=0.35\textwidth]{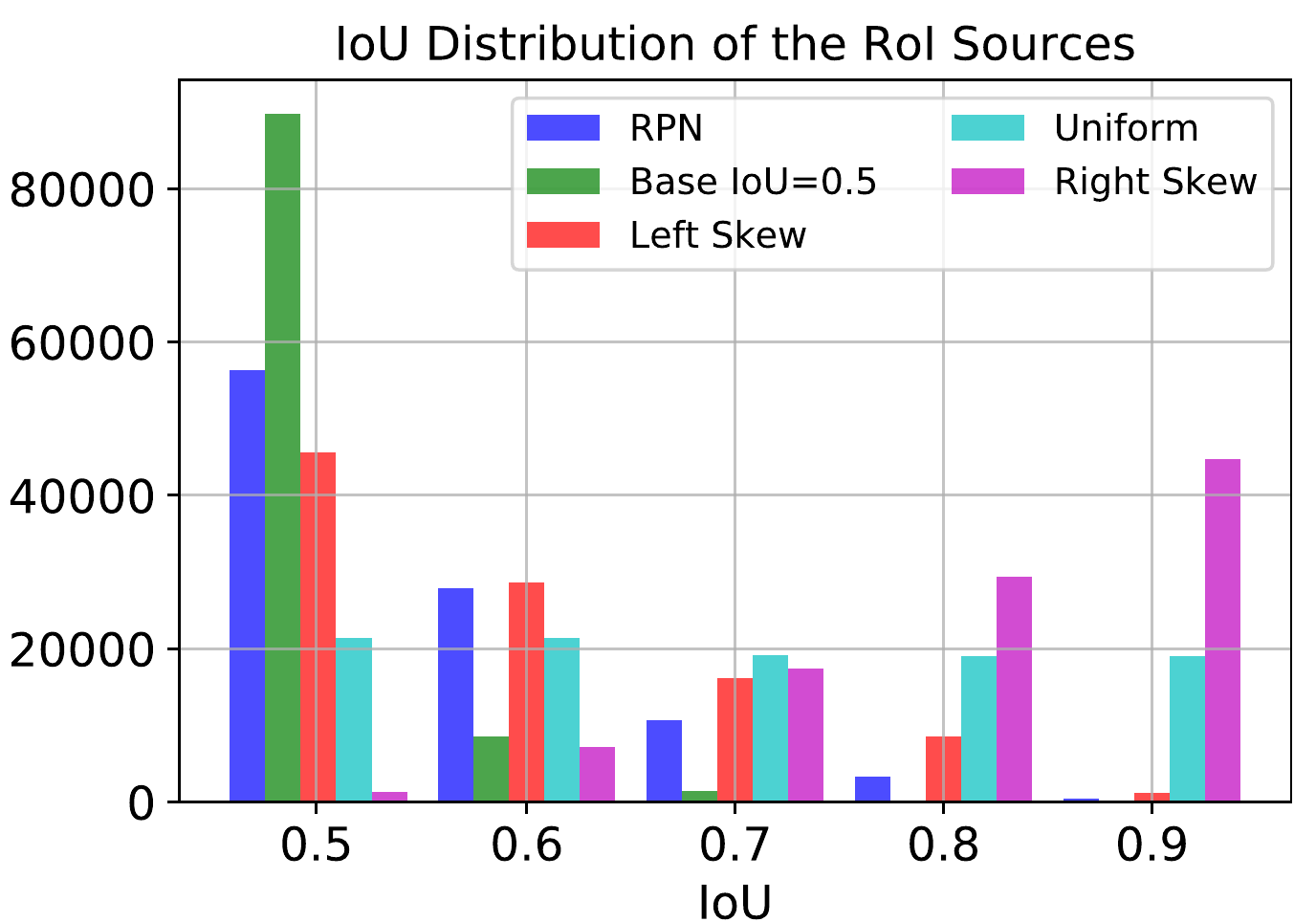} 
\caption{IoU distribution of different RoI Sources. See Appendix for the configurations of the RoI sources.
}
\label{fig:IoUDistribution}
\end{figure}

In this section, we first point out some imbalance on the distribution of BBs. Then, we investigate how several characteristics of RoIs affect detection performance by generating RoIs with our pRoI generator.


\begin{table}
\caption{Effect of the batch properties for generated positive samples (see Fig. \ref{fig:IoUDistribution} for different RoI sources) on Pascal VOC 2007. We trained each RoI source with balanced foreground-foreground distribution and simulating OHPM. RS, Unif, LS and Base respectively denote pRoI-Right Skew, pRoI-Uniform, pRoI-Left Skew and pRoI-Base IoU=0.5 distributions. FGB refers to foreground balanced generation of RoIs.}
 \centering
 \scriptsize
 \setlength{\tabcolsep}{0.2em}
\begin{tabular}{|l|c|c|c|c|c|c|c|}
\hline
RoI &$FGB?$ &OHPM& $\mathrm{moLRP}$ & $\mathrm{moLRP}$ & $\mathrm{moLRP}$ & $\mathrm{moLRP}$ & $\mathrm{mAP@{0.5}}$\\
Distrib. &  & & $\downarrow$ & (IoU) $\downarrow$ & (FP) $\downarrow$ & (FN) $\downarrow$ & $\uparrow$\\
\hline\hline
&No&No&$64.6$&$21.4$&$18.7$&$29.8$&$74.9$\\
RS &Yes&No&$64.5$&$21.5$&$18.7$&$29.5$&$75.3$\\
&Yes&Yes&$60.4$&$19.5$&$16.8$&\textbf{27.2}&$77.4$\\\hline
&No&No&$61.3$&$19.5$&$17.9$& $28.5$&$76.3$\\
Unif. &Yes&No&$61.1$&$19.5$&$17.0$&$28.8$&$76.9$\\
&Yes&Yes& \textbf{59.9}&$19.2$&\textbf{16.0}&$27.6$&\textbf{77.8}\\ \hline  
&No&No&$60.4$&$19.1$&$16.9$& $28.3$&$77.0 $\\
LS &Yes&No&$60.3$&\textbf{19.0}&$17.3$&$28.2$&$77.2$\\
&Yes&Yes&$60.7$&$19.3$&$17.7$& $27.8$&$76.9$\\\hline
&No&No&$61.5$&$19.7$&$17.2$& $28.8$&$76.6$\\ 
Base &Yes&No&$61.4$&$19.3$&$16.3$&$29.4$&$76.7$\\
&Yes&Yes&$61.2$&$19.7$&$16.6$&$28.6$&$76.7$\\
\hline
\end{tabular}
\label{table:BatchProperties}
\end{table}

\subsection{BB IoU Imbalance}

Our BB generator method (Algorithm \ref{alg:IoUSample}) samples boxes for a given IoU threshold, spatially uniformly. It does not impose an upper bound for the IoUs of the sampled boxes. Therefore, in order to analyze the density of the different IoUs for the positive samples, we uniformly generate $100K$ boxes for each IoU distribution type and plot the distribution of the generated boxes in Fig. \ref{fig:IoUDistribution}. Note that training a detector with different IoU distributions of positive examples affects the resulting test performance (see Table \ref{table:BatchProperties}), which implies the effect of BB IoU imbalance.

From Fig. \ref{fig:IoUDistribution}, we observe the following:

\textbf{(1)} The distribution of the boxes with $base IoU=0.5$ is highly biased towards $0.5$ and includes very low samples with higher IoUs. This implies that the proportion of the boxes with $IoU>0.9$ is far too low than that of the boxes with $0.6>IoU>0.5$ when $T=0.5$. 

\textbf{(2)} RPN RoIs follow a similar tendency to the sampled boxes with $base IoU=0.5$ since the RoIs are based on anchors, which are uniformly distributed with a fixed set of boxes on the image. Thanks to the RPN regressor, the IoU distribution improves compared to the distribution of the sampled boxes with $base IoU=0.5$. On the other hand, this bias towards $0.5$ is previously argued to make the regressor overfit for smaller IoUs \cite{CascadeRCNN}.

\textbf{(3)} RPN is able to provide hard positive examples inherently; however, the number of prime samples (i.e. examples with larger IoUs) is quite low. This is critical since it is shown that prime sampling performs better than hard positive mining \cite{PrimeSample}.

\subsection{Foreground-Foreground Class Imbalance}
We observe that, for each RoI source, addressing foreground-foreground imbalance ($fg\_balance=1$) improves performance in terms of both mAP and moLRP, especially for the right skew and uniform cases (See Table \ref{table:BatchProperties}). Moreover, addressing foreground-foreground class imbalance does not seem to affect the localization error ($\mathrm{moLRP_{IoU}}$) but improves the classification performance since $\mathrm{mAP@{0.5}}$, $\mathrm{moLRP_{FP}}$ and $\mathrm{moLRP_{FN}}$ get better (except for the left-skew case). Therefore, we conclude that foreground-foreground class imbalance can also be alleviated by employing methods in the batch level.

\subsection{Effect of Online Hard Positive Mining}
Here we demonstrate another useful use-case of our pRoI generator by simulating OHEM \cite{OHEM} on positive examples. OHEM chooses the positive and negative examples with the highest loss values after applying NMS to the examples to preserve example diversity. A recent study \cite{LibraRCNN} showed that the IoU and the hardness of an example are correlated. On the other hand, another study \cite{PrimeSample} proposed an opposite perspective to the OHEM based on prioritizing ``prime samples'', i.e. samples with high IoUs. To be more clear, OHEM \cite{OHEM} implies preferring positive examples with IoUs just above $0.5$, while prime sampling asserts that the higher the IoU, the better the example. To make an analysis on the positive examples, we simulate OHEM by (i) initially generating $128$ BBs by pRoI generator, (ii) applying NMS using loss value of an example, (iii) finally selecting the ones with the larger loss values. We coin this as \textbf{online hard positive mining (OHPM)}. OHPM also presents an example where pRoI generator can simulate sampling schemes.

In our experiments, we also observe that the effect of the hard examples depends on the IoU distribution of the RoIs and high-quality samples are required during training: In Table \ref{table:BatchProperties}, when OHPM is applied, uniform and right-skew distributions, which have more difficult examples due to their distribution (Fig. \ref{fig:IoUDistribution}), have better performance compared to the left-skew and ``Base IoU=0.5'' cases. Moreover, while OHPM does not improve the performance of left-skew and ``Base IoU=0.5'' cases, it is crucial for the right-skew and uniform distributions (Table \ref{table:BatchProperties}). Therefore, similar to prime sampling \cite{PrimeSample}, we show that examples with higher IoUs are crucial during training, however, we also show that these examples should be supported by hard examples. 

\subsection{BB Spatial Imbalance}

\begin{figure}{c}
\centering
\includegraphics[width=0.35\textwidth]{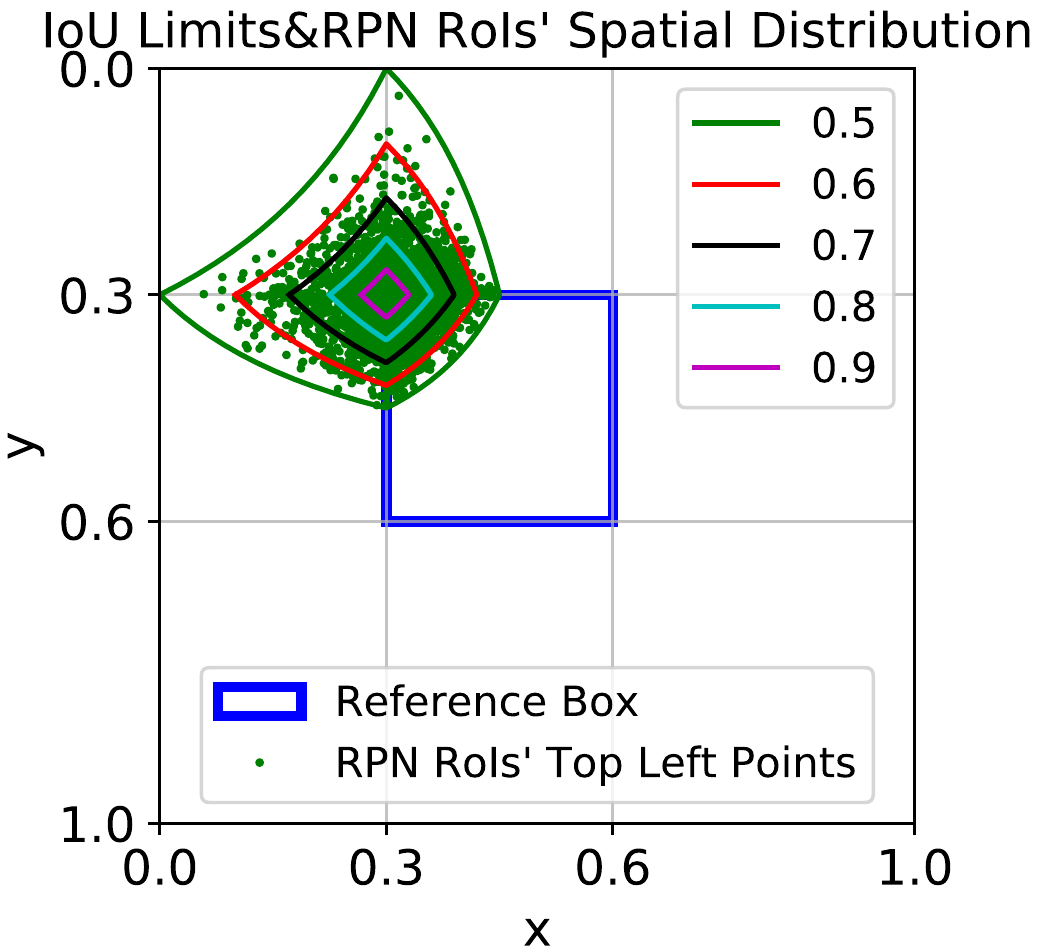}
\caption{Spatial Distribution of the top left points of $2,500$ RPN RoIs and maximum IoU Limits from $IoU=0.9$ to $0.5$ (in-out direction)}
\label{fig:IoUSptDistribution}
\end{figure}
We now analyze the spatial distribution of the RPN RoIs and how they fit within the theoretical IoU boundaries in Fig. \ref{fig:IoUSptDistribution}. To be able to make such an analysis, we selected a reference box with $[x_1, y_1, x_2, y_2] = [0.3,0.3,0.6,0.6]$. At the final epoch of the RPN training, we track positive RPN proposals and their associated ground truths. As discussed in Section \ref{sec:BBSampler}, we scaled and shifted the ground truths to the reference box and applied the same transformations to  their associated positive RPN proposals (i.e., RoIs). Among the positive RPN proposals, top-left (TL) points of the $2,500$ RoIs are plotted with green dots in Fig. \ref{fig:IoUSptDistribution}. Then, using $\mathrm{findTLFeasibleSpace}()$ function in Algorithm \ref{alg:IoUSample}, we plot the theoretical limits for the top left points for RoIs with IoUs larger than $0.5$, $0.6$, $0.7$, $0.8$ and $0.9$.

Fig. \ref{fig:IoUSptDistribution} leads to several key findings: \textbf{(1)} As expected, as the IoU decreases, the boundaries occupy a larger space around the TL point of the reference box. A result of this is that the sample space for $0.9$ is very small, which makes it more difficult to have distinct RoIs with $IoU>0.9$. 
\textbf{(2)} We observe that no TL point is outside of the $0.5$ boundary, which is a sanity check for the boundaries since a RoI is labeled as positive if it has at least $0.5$ IoU with a ground truth. 
\textbf{(3)} The TL points of the RPN RoIs are accumulated around the TL point of the reference box and they are not uniformly distributed within the $0.5$ boundary. 
\textbf{(4)} The TL points of the RPN RoIs tend to be inside the reference box more than to be outside. Specifically, RPN RoIs between $x>0.3, y>0.3$ and $x<0.3, y<0.3$ are $28.2\%$ and $21.0\%$ of the all, respectively. 

Especially the last two observations may be critical for an object detector since they may result in a positive bias towards specific RoIs and may make the generalization difficult over the entire spatial space. However, the effects of all these observations require experimental or theoretical validation that is not provided in this paper.

\section{Practical Improvements}
\label{section:Practical}

\begin{table}
\caption{Average performance of 3 runs for Faster R-CNN with our OFB sampling on Pascal VOC. Lower is better for $\mathrm{moLRP}$ and its components, whereas higher is better for mAP.}
 \centering
 \scriptsize 
\begin{tabular}{|l|c|c|c|c|c|c|}
\hline
Sampling& &  \multicolumn{3}{c|}{$\mathrm{moLRP}$} & \\
Method& $\mathrm{moLRP} \downarrow$ & $\mathrm{{IoU}}\downarrow$ & $\mathrm{{FP}}\downarrow$ & $\mathrm{{FN}}\downarrow$ & $\mathrm{mAP@{0.5}}\uparrow$\\
\hline\hline
Random & $59.4$&$18.7$&$16.2$&$27.7$&$78.0$\\
OFB & \textbf{58.9}&$18.7$& \textbf{15.6}& \textbf{27.2}& \textbf{78.5}\\
  \hline
\end{tabular}
\label{table:classImb}
 \end{table}

\begin{table}
\caption{Comparison of different sampling mechanisms on MS COCO using Faster R-CNN. Lower is better for $\mathrm{moLRP}$ and its components, whereas higher is better for mAP. mAP stands for COCO-style mAP. R and H denote random and hard sampling respectively, and OFB is our sampling method for positive RoIs. The first block compares among different positive sampling schemes combined with random sampling, while the second block compares their combinations with hard example mining.}
 \centering
 \scriptsize 
\begin{tabular}{|c|c|c|c|c|}
\hline
\multicolumn{2}{|c|}{Sampling Method}& & & \\
Positive&Negative& $\mathrm{moLRP} \downarrow$ & $\mathrm{mAP}\uparrow$& $\mathrm{mAP@{0.5}}\uparrow$\\
\hline\hline
R&R&$72.4$&$34.1$&$55.2$\\
H&R&$75.3$&$31.0$&$51.7$\\
OFB&R &\textbf{72.1}&\textbf{34.7}&\textbf{55.8}\\ \hline
R&H&$71.9$&$35.3$&$54.6$\\
H&H&$74.6$&$31.1$&$50.0$\\
OFB&H&\textbf{70.9}&\textbf{35.6}&\textbf{55.3}\\
  \hline
\end{tabular}
\label{table:classImbCOCO}
 \end{table}
In this section, we use OFB sampling and BB generator to improve an object detector by addressing foreground-class imbalance and by controlling the number and distribution of RoIs for training the second-stage.

\subsection{Online Foreground Balanced Sampling} 
In end-to-end training, the set of positive RoIs are limited and they are not generated as in pRoI generator. Motivated from the analysis using pRoI generator on the effect of foreground-foreground class imbalance (see Section \ref{sec:Analysis}), we propose an online sampling method to be used in the conventional training pipeline. Denoting the total number of classes in a batch by $C$ and the number of positive RoIs for class $c$ by $k_c$, each RoI is assigned a  probability $1/(C k_c)$ and the subset of RoIs to train Faster R-CNN is sampled from this multinomial distribution. We call this sampling scheme as Online Foreground Balanced (OFB) Sampling.

In order to see the effect, we train Faster R-CNN with and without OFB sampling and present results in Tables \ref{table:classImb} and \ref{table:classImbCOCO}. For the Pascal VOC \cite{PASCAL}, we observe $0.5\%$ improvement in $\mathrm{mAP@{0.5}}$ and $\mathrm{moLRP}$, with better performance in precision and recall components of $\mathrm{moLRP}$ and no impact on the regression branch. In our experiments with MS COCO (Table \ref{table:classImbCOCO}), we compared our results with hard example mining \cite{SSD, OHEM}. Similar to the findings of Cao et al. \cite{PrimeSample} and our analysis in Section \ref{sec:Analysis}, while hard positive mining does not improve performance, our OFB sampling is beneficial for foreground examples. Moreover, the table shows that OFB sampler can be combined with sampling approaches for negative BBs. In any case, similar to our experiments for Pascal VOC, the best performance gain is in $\mathrm{mAP@{0.5}}$. This suggests that controlling RoIs to balance foreground classes has also a role during training of the object detectors and OFB, an efficient sampling algorithm, can be considered a basic solution for the problem.



\subsection{Generating More Samples in Higher IoUs} 

\begin{table*}
\caption{Performance Comparison with RPN on PASCAL VOC. $RoINum$ is the input of pRoI generator, fg/bg is the desired fg and bg RoI numbers during training, and Mean RoI \# is the actual mean of number of positive RoIs. Note that fg/bg RoI numbers are set differently for pRoI and RPN so that the best performance is achieved for both of these RoI sources in order to provide a fair comparison especially in favor of RPN. We trained the models (except the one with the $^*$ mark) for 16 epochs with a learning rate decay at epochs $9$ and $14$ since our model provides more diverse data than RPN (see in Fig. \ref{fig:IoUSptDistribution} that the TL points of the RPN RoIs clusters around TL point of $B$) and there are fewer samples for training in higher IoUs (see Mean RoI \# in Table \ref{table:Performance})}
 \centering
 \scriptsize 
\begin{tabular}{|l|c|c|c|c|c|c|c|c|c|}
\hline
RoI Source&$IoU$&$RoINum$&$fg/bg$&Mean RoI \# $\uparrow$&$\mathrm{moLRP} \downarrow$&$\mathrm{moLRP_{IoU}} \downarrow$& $\mathrm{moLRP_{FP}} \downarrow$& $\mathrm{moLRP_{FN}} \downarrow$&$\mathrm{mAP@IoU} \uparrow$\\
\hline\hline
RPN*&$0.5$&N/A&$32/96$&\textbf{27.12}&$59.3$&$18.7$&$16.0$&\textbf{27.7}&\textbf{78.0}\\
pRoI-Uniform&$0.5$&$128$&$32/96$&$25.49$&\textbf{59.2}&\textbf{18.4}&\textbf{15.5}&$28.2$ &$77.1$\\ \hline
RPN&$0.6$&N/A&$27/81$&$16.92$&$65.4$&$17.0$&\textbf{19.4}&$31.9$&\textbf{71.2}\\
pRoI-Uniform&$0.6$&$128$&$27/81$&\textbf{18.28}&$65.4$&\textbf{16.9}&$20.8$&\textbf{31.0}&$70.6$\\
\hline
RPN&$0.7$&N/A&$9/27$&$5.39$&$74.9$&\textbf{14.7}&\textbf{27.2}&$42.1$&$57.3$\\
pRoI-Uniform&$0.7$&$128$&$18/54$&\textbf{9.93}&\textbf{74.5}&$14.9$&$28.0$&\textbf{39.8}&\textbf{57.5}\\
\hline
RPN&$0.8$&N/A&$2/6$&$1.08$&$92.5$&$13.2$&$58.8$&$69.8$&$21.3$\\
pRoI-Uniform&$0.8$&$64$&$8/24$&\textbf{3.92}&\textbf{87.7}&\textbf{12.1}&\textbf{47.8}&\textbf{59.3}&\textbf{32.2}\\
  \hline
RPN&$0.9$&N/A&$2/6$&$0.17$&$99.5$&$7.4$&$94.2$&$97.1$&$0.5$\\
pRoI-Uniform&$0.9$&$32$&$2/6$&\textbf{1.62}&\textbf{99.3}&\textbf{7.3}&\textbf{92.4}&\textbf{96.0}&\textbf{0.9}\\
  \hline
\end{tabular}
\label{table:Performance}
 \end{table*}
 
 \begin{table*}
\caption{Effect of $RoINum$ on PASCAL VOC. Speeds are reported on a single Geforce GTX 1080 Ti.}
 \centering
 \scriptsize 
\begin{tabular}{|l|c|c|c|c|c|c|c|c|c|}
\hline
RoI Source&$RoINum$& $\mathrm{moLRP} \downarrow$ & $\mathrm{moLRP_{IoU}} \downarrow$& $\mathrm{moLRP_{FP}} \downarrow$& $\mathrm{moLRP_{FN}} \downarrow$ & $\mathrm{mAP@{0.5}} \uparrow$&Train Speed $\downarrow$&Mean RoI \# $\uparrow$\\
\hline\hline
pRoI-Uniform&$32$&$60.3$&$19.3$&$16.4$&$27.8$&$77.5$&$0.41s$&$14.81$\\
pRoI-Uniform&$64$&$59.7$&$19.0$&$16.1$&$27.4$&$77.6$&$0.58s$&$21.32$\\
pRoI-Uniform&$128$&$59.9$&$19.2$&$16.0$&$27.6$&$77.8$&$0.97s$&$25.49$\\
  \hline
\end{tabular}
\label{table:RoINumEffect}
 \end{table*}
 
Our approach can be integrated into an object detector without any hindrance on the gradient paths (see Appendix). In this section, we compare a detector trained with our pRoI Generator with a detector trained with the conventional method (i.e. using RPN RoIs) -- see Table \ref{table:Performance}. We use Uniform RoI source with foreground balance and OHPM since it performed the best in Table \ref{table:BatchProperties}. 
For $IoU=\Theta$, we randomly sample negative samples from the output of the RPN in the range $[0.1, \Theta]$ and the positive samples are provided by the pRoI generator also using OHPM. To apply OHPM, we first generate $RoINum$ boxes, then select $fg$ many from them. In IoUs $0.6-0.8$, for which fewer RoIs are possible than $0.5$, we initially train the models for $1$ epoch by setting $fg=32$ and $bg=96$ and track ``Mean RoI \#'' to see an upper bound for the models to generate RoIs and prevent class imbalance modelwise. In this run, Mean RoI \# for IoUs $0.6, 0.7, 0.8$ are $17.26, 7.60, 1.72$ for RPN and $20.0, 11.41, 4.67$ for pRoI-Uniform respectively. Then using $IoU=0.5$ as an example, we multiply the resulting ``Mean RoI \#'' by $1.5$ and set $fg$ approximately to it with $bg=3\times fg$ as in the conventional training. This approach makes training more stable and fair especially for the RPN (see Table \ref{table:Performance}) by balancing foreground and background consistently. 
 
Looking at Table \ref{table:Performance} and comparing the methods in the IoUs that they are trained for, we observe the following:
\textbf{(1)} For $IoU = 0.5$, $0.6$ and $IoU=0.7$ we get comparable results with the conventional training. 
\textbf{(2)} For $IoU=0.8$, where RPN is not able to generate sufficient samples, the performance increases significantly in terms of both metrics since, at each iteration, generated positive boxes are provided consistently to the second stage.
\textbf{(3)} Overall, the mean RoI \# is approximately four times higher at $IoU=0.8$; and, $\mathrm{mAP@0.8}$ and $\mathrm{moLRP}$ improve by $10.9\%$ and $4.8\%$ respectively. A similar trend is also achieved for $IoU=0.9$.
 
In short, these results demonstrate that it is possible to train an object detector using BB generator with comparable results for lower IoUs and significantly better performance for higher IoUs. On par performance for low IoUs can be owing to the fact that there are sufficient amount of samples for these cases to see any imbalance effect.

{\noindent}\textbf{Effect of $\bm{RoINum}$:} Apart from the input parameters to determine the nature of the RoI source, $RoINum$ is the only new hyperparameter in Algorithm \ref{alg:SampleBoxes}. In Table \ref{table:RoINumEffect}, we observe that training improves ($\mathrm{mAP}$ increases) when $RoINum$ is increased because we have more positive samples at each iteration. However, more samples mean slower (yet still acceptable) training speed compared to conventional training having $0.23s$ training speed. 

{\noindent}\textbf{Preliminary Results on MS COCO:} In order to back up our claims, we also conducted an experiment on MS COCO dataset using $IoU=0.8$ with Faster R-CNN. Compared to the baseline achieving  $\mathrm{moLRP}=95.1$ and $\mathrm{mAP@0.8}=13.2$, using pRoI generator the model has $\mathrm{moLRP}=93.7$ and $\mathrm{mAP@0.8}=15.3$. These results suggest that our model is able to generate more diverse examples than the baseline in larger IoUs.
\section{Conclusion}

In this paper, we proposed a BB generator and a positive RoI generator. We showed that generated RoIs can be used both as an analysis tool (owing to its controllable nature) and a training method for the two-stage object detectors. 

We showed that there is a bias in the RPN RoIs' IoU and spatial distribution with respect to the IoU boundaries that are physically possible and analyzed the IoU distributions of RPN and other RoI sources. 

Using our BB generator, we developed a pRoI generator that can generate RoIs overlapping with a GT box with a desired IoU or spatial distribution. Then, we trained Faster R-CNN's second-stage with the RoIs generated according to different distributions. We showed that, by producing more samples than RPN, we can achieve better or comparable performance to Faster R-CNN. Moreover, our results reconciliated two conflicting recent studies \cite{PrimeSample,OHEM} that both using high-IoU RoIs and hard examples can have positive effect on the training if the IoU distribution is appropriate.

Our ideas can be used for analyzing the anchors of a one-stage detector (as well as those of a two-stage detector) in order to design a better anchor set. Furthermore, other applications, e.g. tracking, that require spatially distributed BBs with certain properties can also exploit our approach.
\section*{Acknowledgments}
This work was partially supported by the Scientific and Technological Research Council of Turkey (T\"UB\.ITAK) through the project titled ``Object Detection in Videos with Deep Neural Networks'' (grant number 117E054). Kemal \"Oks\"uz is supported by the T\"UB\.ITAK 2211-A National Scholarship Programme for Ph.D. students. The experiments were partially performed at T\"UB\.ITAK ULAKBIM, High Performance and Grid Computing Center (TRUBA) resources.






\section*{Appendix}
\setcounter{section}{0}
\section{The Properties of $\textrm{IoU}(B,\Bar{B})$}
\label{sec:intro}
Following upon the notation in Section 3 of the paper, we introduce the following properties. For clarity we assume that intersection of two boxes is greater than $0$ and the last pixel is not taken into account (i.e. instead of $\mathrm{A}(B)=(x_2-x_1+1)$, we adopted $\mathrm{A}(B)=(x_2-x_1)$).

\begin{theorem}
$\mathrm{IoU}(B,\Bar{B})$ is scale-invariant.
\end{theorem}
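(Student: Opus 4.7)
The plan is to interpret scale-invariance as follows: for any scale factor $s>0$ applied to both boxes, $\mathrm{IoU}(sB, s\bar{B}) = \mathrm{IoU}(B,\bar{B})$, where $sB = [sx_1, sy_1, sx_2, sy_2]$ and likewise for $s\bar{B}$. I would first state this definition explicitly, so that the subsequent algebra has a clear target.

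Next, I would show that area is homogeneous of degree two under scaling. Directly from the definition, $\mathrm{A}(sB) = (sx_2 - sx_1)(sy_2 - sy_1) = s^2 \mathrm{A}(B)$, and similarly $\mathrm{A}(s\bar{B}) = s^2 \mathrm{A}(\bar{B})$. The only nontrivial step is to verify that intersection also scales as $s^2$. Since $s>0$, the functions $\min$ and $\max$ are positively homogeneous: $\min(s a, s b) = s \min(a,b)$ and $\max(s a, s b) = s \max(a,b)$. Applying this to each factor in the definition of $\mathrm{I}(B,\bar{B})$ gives
\begin{equation}
\mathrm{I}(sB, s\bar{B}) = s^2 \mathrm{I}(B,\bar{B}).
\end{equation}

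Finally, I would substitute these scaling identities into the IoU definition (equation \ref{eq:IoU}):
\begin{equation}
\mathrm{IoU}(sB, s\bar{B}) = \frac{s^2 \mathrm{I}(B,\bar{B})}{s^2 \mathrm{A}(B) + s^2 \mathrm{A}(\bar{B}) - s^2 \mathrm{I}(B,\bar{B})},
\end{equation}
and cancel the common factor $s^2$ from numerator and denominator to obtain $\mathrm{IoU}(B,\bar{B})$. There is no real obstacle here; the only thing to be careful about is the positivity assumption $s>0$, which is needed to justify pulling $s$ out of $\min$ and $\max$ without swapping the operations. Under the paper's standing assumption that intersection is positive and corners are properly ordered, no degenerate cases arise.
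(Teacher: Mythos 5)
Your proof is correct and uses exactly the same mechanism as the paper's: positive homogeneity of $\min$ and $\max$ gives $\mathrm{I}(sB,s\bar{B})=s^2\,\mathrm{I}(B,\bar{B})$ and $\mathrm{A}(sB)=s^2\,\mathrm{A}(B)$, after which the common factor cancels in the IoU ratio. The one substantive difference is that you prove only isotropic scale-invariance (a single factor $s$ applied to both axes), whereas the paper's proof allows independent factors $k_x>0$ and $k_y>0$ in the two axes and cancels the product $k_x k_y$. That extra generality is not cosmetic: the theorem is invoked to justify rescaling arbitrary ground-truth boxes onto a fixed reference box, which changes the aspect ratio and therefore requires $k_x\neq k_y$. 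Your argument extends verbatim — replace $s$ by $k_x$ in the $x$-coordinates and $k_y$ in the $y$-coordinates, and $s^2$ by $k_x k_y$ throughout — so nothing in the proof breaks, but as stated your theorem is strictly weaker than the one the paper needs; you should state and prove the anisotropic version.
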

\begin{proof}
Assume that $k_x>0$ and $k_y>0$ are the scaling factors in the $x$ and $y$ axes respectively and $B_s$, $\Bar{B}_s$ are the scaled boxes. We show that $\mathrm{IoU}(B_s,\Bar{B}_s) =\mathrm{IoU}(B,\Bar{B})$ as follows:
\begin{strip}
\begin{align}
\label{eq:IoUDefSc}
\mathrm{IoU}(B_s,\Bar{B}_s) &= \frac{I(B_s,\Bar{B}_s)}{A(B_s)+A(\Bar{B}_s)-I(B_s,\Bar{B}_s)} \\ \label{eq:DefArIntSc}
&=\frac{\left( \min{(k_x \Bar{x_2},k_x x_2)}-\max{(k_x \Bar{x_1},k_x x_1)} \right)\times \left( \min{(k_y \Bar{y_2},k_y y_2)}-\max{(k_y \Bar{y_1},k_y y_1)} \right)}{(k_x x_2-k_x x_1)\times(k_y y_2-k_y y_1)+(k_x \Bar{x_2}-k_x \Bar{x_1})\times(k_y \Bar{y_2}-k_y \Bar{y_1})-I(B_s,\Bar{B}_s)} \\
\label{eq:MulSc}
&=\frac{k_x \left( \min{(\Bar{x_2},x_2)}-\max{(\Bar{x_1},x_1)} \right) \times k_y \left( \min{(\Bar{y_2},y_2)}-\max{(\Bar{y_1},y_1)} \right)}{k_x (x_2-x_1)\times k_y (y_2-y_1)+k_x (\Bar{x_2}-\Bar{x_1})\times k_y (\Bar{y_2}-\Bar{y_1})-I(kB,\Bar{kB})} \\
\label{eq:DefIntSc}
&=\frac{k_x k_y I(B,\Bar{B}) }{k_x k_y (x_2-x_1)\times(y_2-y_1)+k_x k_y (\Bar{x_2}-\Bar{x_1})\times(\Bar{y_2}-\Bar{y_1})-k_x k_y I(B,\Bar{B})} \\
\label{eq:DefFinalSc}
&=\frac{k_x k_y I(B,\Bar{B}) }{k_x k_y \left( (x_2-x_1)\times(y_2-y_1)+(\Bar{x_2}-\Bar{x_1})\times(\Bar{y_2}-\Bar{y_1})-I(B,\Bar{B}) \right) } \\
&=\mathrm{IoU}(B,\Bar{B})
\end{align}
\end{strip}
Eq. \ref{eq:IoUDefSc} defines the IoU and Eq. \ref{eq:DefArIntSc} replaces area and intersection definitions. In Eq. \ref{eq:MulSc}, we use the property that multiplying by a positive scalar does not change minimum and maximum of two numbers. Eq. \ref{eq:DefIntSc} incorporates the intersection definition. Eq. \ref{eq:DefFinalSc} gets the denominator in the $k \hat{k}$ parenthesis, which simplifies the term to the definition of $IoU(B,\Bar{B})$.
\end{proof}

\begin{theorem}
$\textrm{IoU}(B,\Bar{B})$ is translation-invariant.
\end{theorem}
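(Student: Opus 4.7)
The plan is to mirror the structure of the proof of Theorem 1 as closely as possible, since translation-invariance is structurally even simpler than scale-invariance. I introduce translation offsets $t_x, t_y \in \mathbb{R}$ and define $B_t$ and $\Bar{B}_t$ as the boxes obtained by adding $(t_x, t_y)$ to both corners of $B$ and $\Bar{B}$ respectively. The goal is then to show $\mathrm{IoU}(B_t,\Bar{B}_t) = \mathrm{IoU}(B,\Bar{B})$ by expanding the right-hand side via Eq.~\ref{eq:IoU} and cancelling the $t_x, t_y$ terms.

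First I would handle the areas: since $A(B_t) = ((x_2 + t_x) - (x_1 + t_x))((y_2 + t_y) - (y_1 + t_y)) = (x_2 - x_1)(y_2 - y_1) = A(B)$, the translation drops out immediately, and the same holds for $A(\Bar{B}_t) = A(\Bar{B})$. Next I would handle the intersection, which requires the elementary identities $\min(a + t, b + t) = \min(a,b) + t$ and $\max(a + t, b + t) = \max(a,b) + t$. Applying these to each of the four $\min/\max$ operations in the definition of $I(B_t, \Bar{B}_t)$ gives
\begin{align*}
I(B_t,\Bar{B}_t) &= \bigl(\min(\Bar{x}_2, x_2) + t_x - \max(\Bar{x}_1, x_1) - t_x\bigr) \\
&\quad \times \bigl(\min(\Bar{y}_2, y_2) + t_y - \max(\Bar{y}_1, y_1) - t_y\bigr) \\
&= I(B,\Bar{B}).
\end{align*}

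Substituting these three equalities into the IoU definition yields $\mathrm{IoU}(B_t,\Bar{B}_t) = I(B,\Bar{B}) / (A(B) + A(\Bar{B}) - I(B,\Bar{B})) = \mathrm{IoU}(B,\Bar{B})$, which completes the proof. There is essentially no obstacle here; the only step that requires any care is noting that $\min$ and $\max$ commute with a common additive shift, which is the translation analogue of the observation in Theorem 1 that these operations commute with multiplication by a positive scalar. Unlike the scaling case, no factoring is needed because the translation constants subtract out term-by-term rather than surviving as a common multiplicative factor.
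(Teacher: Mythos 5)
Your proof is correct and follows essentially the same route as the paper's: both rely on the observation that a common additive shift commutes with $\min$ and $\max$ and that the shifts cancel in the area and intersection terms. The only cosmetic difference is that you establish $A(B_t)=A(B)$ and $I(B_t,\Bar{B}_t)=I(B,\Bar{B})$ as separate identities before substituting, whereas the paper performs the cancellation inside one expanded fraction; the underlying argument is identical.
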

\begin{proof}
Assuming that $k_x \in  \mathrm{R}$ and $k_y \in  \mathrm{R}$ are the perturbation in the $x$ and $y$ axis respectively and $B_t$, $\Bar{B}_t$ are the perturbed boxes. We show that $\mathrm{IoU}(B_t,\Bar{B}_t) =\mathrm{IoU}(B,\Bar{B})$ as follows:
\newpage
\footnotesize
\begin{strip}
\begin{align} 
\label{eq:IoUDefSh}
\mathrm{IoU}(B_t,\Bar{B}_t) &= \frac{I(B_t,\Bar{B}_t)}{A(B_t)+A(\Bar{B_t})-I(B_t,\Bar{B}_t)} \\ 
\label{eq:DefArIntSh}
&=\frac{\left( \min{(\Bar{x_2}+k_x,x_2+k_x)}-\max{(\Bar{x_1}+k_x,x_1+k_x)} \right)\times\left( \min{(\Bar{y_2}+k_y,y_2+k_y)}-\max{(\Bar{y_1}+k_y,y_1+k_y)} \right)}{((x_2+k_x)-(x_1+k_x))\times((y_2+k_y)-(y_1+k_y))+((\Bar{x_2}+k_x)-(\Bar{x_1}+k_x))*((\Bar{y_2}+k_y)-(\Bar{y_1}+k_y))-I(B_t,\Bar{B}_t)}\\
\label{eq:AddSc}
&=\frac{\left( \min{(\Bar{x_2},x_2)}+k_x-\max{(\Bar{x_1},x_1)-k_x} \right)\times\left( \min{(\Bar{y_2},y_2)}+k_y-\max{(\Bar{y_1},y_1)}-k_y \right)}{(x_2+k_x-x_1-k_x)\times(y_2+k_y-y_1-k_y)+(\Bar{x_2}+k_x-\Bar{x_1}-k_x)\times(\Bar{y_2}+k_y-\Bar{y_1}-k_y)-I(B_t,\Bar{B}_t)}\\
\label{eq:CanstCancel}
&=\frac{\left( \min{(\Bar{x_2},x_2)}-\max{(\Bar{x_1},x_1)} \right)\times\left( \min{(\Bar{y_2},y_2)}-\max{(\Bar{y_1},y_1)} \right)}{(x_2-x_1)\times(y_2-y_1)+(\Bar{x_2}-\Bar{x_1})\times(\Bar{y_2}-\Bar{y_1})-I(B_t,\Bar{B}_t)}\\
\label{eq:DefFinalSh}
&=\frac{I(B,\Bar{B})}{A(B)+A(\Bar{B})-I(B,\Bar{B})}\\
&=\mathrm{IoU}(B,\Bar{B})
\end{align}
\end{strip}
Again, Eq. \ref{eq:DefArIntSh} replaces area and intersection definitions in the IoU definition. In Eq. \ref{eq:AddSc}, we use the property that adding a scalar to numbers adds a scalar to the minimum and maximum of two numbers. In Eq. \ref{eq:CanstCancel}, constants cancel each other and Eq. \ref{eq:DefFinalSh} replaces area and intersection for the $\mathrm{IoU}(B,\Bar{B})$, which simplifies to the definition of $\mathrm{IoU}(B,\Bar{B})$.
\end{proof}

\section{Details of the Bounding Box Generator}
\label{sec:IoUSampler}
In this section we present the derivation of the Equation 4 and 5, and explain the $\mathrm{findBRFeasibleSpace}(B, T, \mathrm{TL}(\Bar{B}))$ function. 

\begin{table*}
\caption{Top-Left space bounds and equations. See Fig. 4 in the paper.}
 \centering
 \scriptsize 
\begin{adjustbox}{width=\textwidth}
\begin{tabular}{|c|c|c|c|}
\hline
Region&Min Bound&Max Bound&Equation\\
\hline\hline
I&$\Bar{x_1}=x_1$&$\Bar{x_1}=x_2- (x_2-x_1)\times T$&$\Bar{y_1}=y_2- \frac{ \frac{I(B,\Bar{B})}{T}+I(B,\Bar{B})-A(B)}{(x_2-\Bar{x_1})} $\\
\hline
II&$\Bar{y_1}=y_1$&$\Bar{y_1}=y_2-\frac{A(B)\times T}{x_2-x_1}$
&$\Bar{x_1}=x_2-\frac{I(B,\Bar{B})\times A(B)}{(y_2-\Bar{y_1})}$\\
\hline
III&$\Bar{y_1}=y_1$&$\Bar{y_1}=y_2-\frac{A(B)\times T}{x_2-x_1}$
&$\Bar{x_1}=x_2-\frac{ \frac{I(B,\Bar{B})}{T}-A(B)+I(B,\Bar{B})}{(y_2-\Bar{y_1})}$\\
\hline
IV&$\Bar{y_1}=\frac{(y_2\times(T-1))+ y_1}{T}$&$\Bar{y_1}=y_1$&    $\Bar{x_1}=x_2-\frac{A(B)}{T\times(y_2-\Bar{y_1})}$\\
  \hline
\end{tabular}
\end{adjustbox}
\label{table:TLSpaceEq}
 \end{table*}

\begin{table*}
\caption{Bottom-Right space bounds.}
 \centering
\resizebox{\columnwidth*2}{!}{
\begin{tabular}{|c|c|c|}
\hline 
Region&Min Bound&Max Bound\\
\hline \hline
I&$\Bar{y_2}=\frac{T\times A(B)+T\times(x_2-\alpha)\times\beta+\beta\times(x_2-\alpha)-T\times\Bar{y_1}\times(x_2-\Bar{x_1})}{((T+1)\times(x_2-\alpha)-T\times(x_2-\Bar{x_1}))} $&$\Bar{y_2}=y_2$\\
\hline
II&$\Bar{x_2}=x_2$&$\Bar{x_2}=\Bar{x_1}+\frac{ \frac{I(B,\Bar{B})}{T}-A(B)+I(B,\Bar{B})}{(y_2-\Bar{y_1})}$\\
\hline
III&$\Bar{y_2}=y_2$&$\Bar{y_2}=\Bar{y_1}+\frac{ \frac{I(B,\Bar{B})}{T}-A(B)+I(B,\Bar{B})}{(x_2-\Bar{x_1})}$
\\
\hline
IV&$\Bar{x_2}=\frac{T\times A(B)+T\times(y_2-\beta)\times\alpha+\alpha\times(y_2-\beta)-T\times\Bar{x_1}\times(y_2-\Bar{y_1})}{((T+1)\times(y_2-\beta)-T\times(y_2-\Bar{y_1}))} $&$\Bar{x_2}=x_2$\\
  \hline
\end{tabular}}
\label{table:BRSpaceLim}
 \end{table*}
  \begin{table*}
\caption{Bottom-right space equations.}
 \centering
\resizebox{400pt}{!}{
\begin{tabular}{|c|c|}
\hline
Region & Equation\\
\hline\hline
I&$\Bar{x_2}=\Bar{x_1}+\frac{\frac{I(B,\Bar{B})}{T}-A(B)+I(B,\Bar{B})}{\Bar{y_2}-\Bar{y_1}}$\\
\hline
II&$\Bar{y_2}=\Bar{y_1}+\frac{\frac{I(B,\Bar{B})}{T}-A(B)+I(B,\Bar{B})}{\Bar{x_2}-\Bar{x_1}}$\\
\hline
III&$\Bar{x_2}=\frac{T\times A(B)+\alpha\times T\times (\hat{\beta}-\beta)+\alpha\times(\hat{\beta}-\beta)-T\times\Bar{x_1}\times(\Bar{y_2}-\Bar{y_1})}{(T+1)\times(\hat{\beta}-\beta)-T\times(\Bar{y_2}-\Bar{y_1}))}$\\
\hline
IV&$\Bar{y_2}=\frac{T\times A(B)+\beta\times T\times(\hat{\alpha}-\alpha)+\beta\times(\hat{\alpha}-\alpha)-T\times\Bar{y_1}\times(\Bar{x_2}-\Bar{x_1})}{(T+1)\times(\hat{\alpha}-\alpha)-T\times(\Bar{x_2}-\Bar{x_1}))}$\\
  \hline
\end{tabular}}
\label{table:BRSpaceEq}
 \end{table*}
\begin{figure*}[h]
\centering
\includegraphics[width=0.85\textwidth]{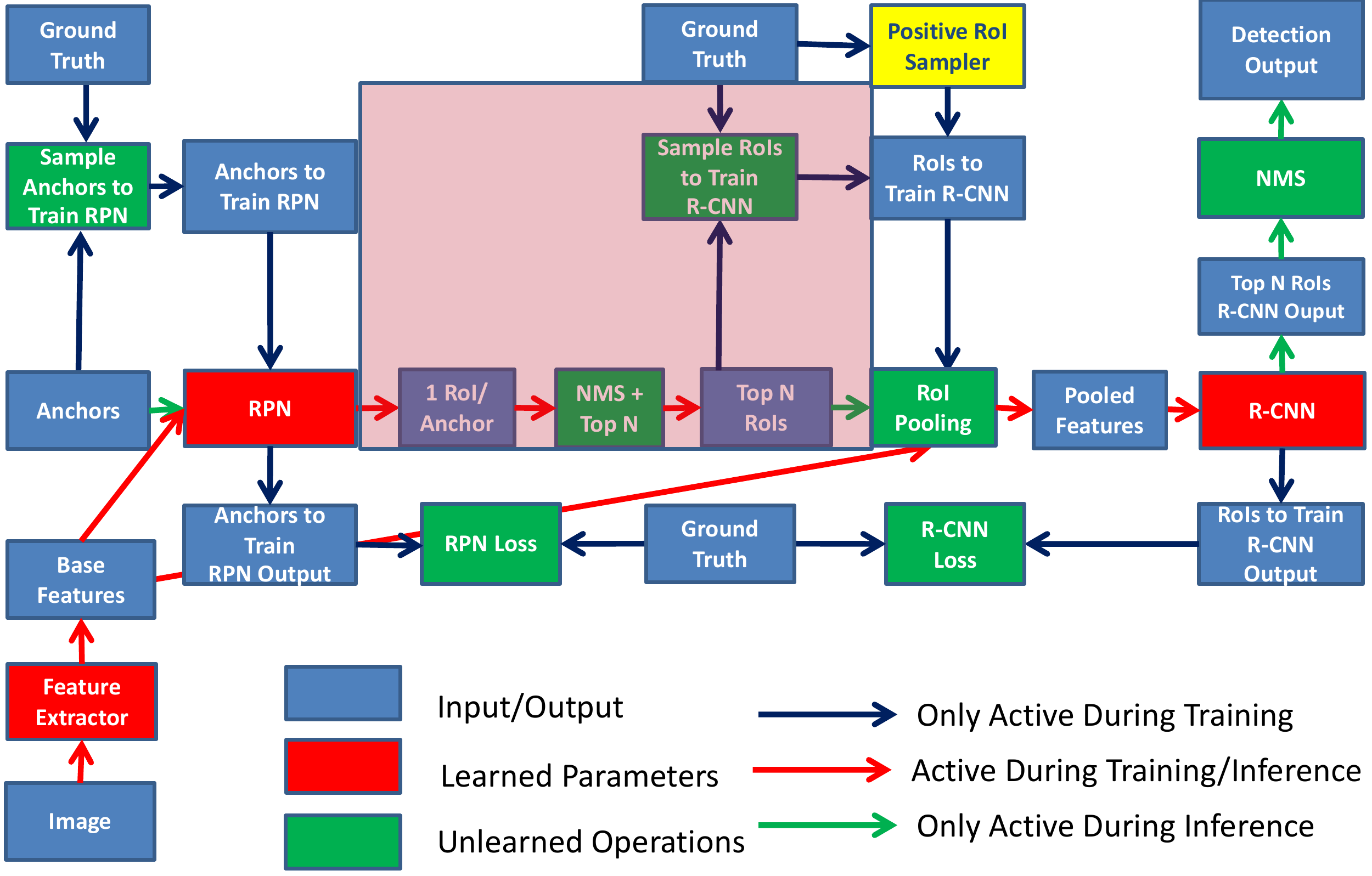} 
\caption{Conventional Faster R-CNN Training and our modification. During training, Positive RPN RoIs are not utilized and thus the modules presented under the large red rectangle are not used for positive RoIs. These RoIs are generated by the Positive RoI Generator shown in yellow box.}
\label{fig:overview}
\end{figure*}

\subsection{$\mathrm{findTLFeasibleSpace}(B, T)$ function}
Here, we derive Equation 4 and 5 in the paper, and present the equations for the top-left space..

In order to derive Equation 4 depicting $x_{max}^I$, we bound the $x$ coordinate first. It is obvious that $x_{min}^I=x_1$ due to the boundary of Region I. For $x_{max}^I$, we know that $\Bar{y_1}=y_1$ again thanks to the region boundary. Therefore, since we have only one unknown, $x_{max}^I$, we use Eq. the definition of the IoU to determine its value in Eq. \ref{eq:defIoU4}-\ref{eq:final1}. Eq. \ref{eq:defIoU1} defines IoU based on Eq. \ref{eq:defIoU4}. In Eq. \ref{eq:putTerms1}, we set $\min{(\Bar{x_2},x_2)}=x_2$, $\max{(\Bar{x_1},x_1)}=x_{max}^I$, $\min{(\Bar{y_2},y_2)}=y_2$ and $\max{(\Bar{y_1},y_1)}=y_1$ by taking into the intersection definition in Region I. Also note that $\Bar{x_1}=x_{max}^I$, $\Bar{y_1}=y_1$, $\Bar{x_2}=x_2$ and $\Bar{y_2}=y_2$ in this case. In Eq. \ref{eq:rearrTerms1}-\ref{eq:final1}, we just rearrange the terms to have $x_{max}^I$ as a left hand side term.
\begin{strip}
\begin{align}
    \label{eq:defIoU4}
    \textrm{IoU}(B,\Bar{B})&= \frac{I(B,\Bar{B})}{A(B)+A(\Bar{B})-I(B,\Bar{B})} \\ 
    \label{eq:defIoU1}
    &=\frac{\left( \min{(\Bar{x_2},x_2)}-\max{(\Bar{x_1},x_1)} \right)\times\left( \min{(\Bar{y_2},y_2)}-\max{(\Bar{y_1},y_1)} \right)}{(x_2-x_1)\times(y_2-y_1)+(\Bar{x_2}-\Bar{x_1})\times(\Bar{y_2}-\Bar{y_1})-I(B,\Bar{B})} \\
    \label{eq:putTerms1}
    &\Rightarrow T=\frac{( x_2-x_{max}^I)\times( y_2-y_1)}{(x_2-x_1)\times(y_2-y_1)+(x_2-x_{max}^I)\times(y_2-y_1)-( x_2-x_{max}^I)\times( y_2-y_1)} \\    
    \label{eq:rearrTerms1}    
    &\Rightarrow  (x_2-x_1)\times (y_2-y_1)\times T=(x_2-x_{max}^I)\times ( y_2-y_1) \\ 
    \label{eq:rerearrTerms1}    
    &\Rightarrow x_{max}^I=x_2-\frac{(x_2-x_1)\times (y_2-y_1)\times T}{( y_2-y_1)}\\
    \label{eq:final1}    
    &\Rightarrow  x_{max}^I=x_2- (x_2-x_1)\times T
\end{align}
\end{strip}
Now since we know the values of $\Bar{x_1}$ based on the bounds, we can derive the Equation 5 (in the paper) for any $\Bar{y_1}$ value in equations by moving within bounds. Since $I(B,\bar{B})=(x_2-\Bar{x_1})\times (y_2-y_1)$, it does not rely on $\Bar{y_1}$ and we directly use $I(B,\bar{B})$ in the following equations:

\begin{align}
    &\mathrm{IoU}(B,\Bar{B})=\frac{I(B,\Bar{B})}{A(B)+(x_2-\Bar{x_1})\times (y_2-\Bar{y_1})-I(B,\Bar{B})} \\
    &\Rightarrow T\times (x_2-\Bar{x_1})\times (y_2-\Bar{y_1})\nonumber \\
    &=I(B,\Bar{B})+T\times I(B,\Bar{B})-T\times A(B)\\
    &\Rightarrow \Bar{y_1}=y_2- \frac{ \frac{I(B,\Bar{B})}{T}+I(B,\Bar{B})-A(B)}{(x_2-\Bar{x_1})}
\end{align}
Table \ref{table:TLSpaceEq} presents all of the equations derived using the same methodology.

\subsection{$\mathrm{findBRFeasibleSpace}(B, T, \mathrm{TL}(\Bar{B}))$ Function}
We follow the same approach for the bottom right corner with the top left corner. However, different from top-left space this step is required also consider the point generated top-left point. Note that the size of the polygon in the bottom-right space is affected by the distance between $\mathrm{TL}(\Bar{B})$ and $\mathrm{TL}(B)$. Maximum bottom-right polygon size, with exactly the same size of the top-left polygon, is achieved when $\mathrm{TL}(\Bar{B})=\mathrm{TL}(B)$. Conversely, bottom-right polygon degenerates to a point at $\mathrm{BR}(B)$ if the sampled $\mathrm{TL}(\Bar{B})$ hits the border of the top-left polygon. 

We add two additional parameters for the sake of clarity: $\alpha = \max(\bar{x_1}, x_1)$, $\beta= \max(\bar{y_1}, y_1)$, $\hat{\alpha} = \min(\bar{x_2}, x_2)$ and $\hat{\beta}=\min(\bar{y_2}, y_2)$. The bounds and the equations are derived by the same methodology that is illustrated in the first step presented in Tables \ref{table:BRSpaceLim} and \ref{table:BRSpaceEq} respectively. 
 
\section{Implementation Details}
\label{sec:ImplementationDetails}
\subsection{Integrating pRoI Generator into the Training}
The training of the two-stage object detectors involves 3 different networks as shown in Fig. \ref{fig:overview}. The first network is the feature extractor (i.e. ResNet\cite{Resnet}) which presents the base features to the second network, the proposal generator (i.e. RPN \cite{FasterRCNN}), and the third network, which is the object detector (i.e. R-CNN \cite{RCNN}, R-FCN \cite{RFCN}). The feature extractor is trained with the gradients back-propagated from the proposal generator and the object detector. The proposal generator is trained by a subset of the anchor-ground truth combinations (chosen by Sample Anchors to Train RPN in Figure \ref{fig:overview}) and a subset of these RPN proposals (i.e. RoIs) (chosen by Sample RoIs to Train R-CNN in Figure \ref{fig:overview}) are fed into the R-CNN after a series of operations including NMS and RoI Pooling that do not include learnable parameters. Finally, the loss is back-propagated through the entire network to update the parameters. However, the RoIs from the RPN is limited in number and diversity, which can impact the analysis and training. To address this, pRoI generator aims to generate RoIs with any desired property and in any number. Note that the gradients can also be back-propagated to the feature extractor as in the conventional training (i.e. RPN) since positive RoI Generator uses ground truths to generate an RoI in a similar manner to the conventional training, but differently it can generate boxes with the desired properties. During training, only for positive RoIs, pRoI Generator does not use the modules that are under the transparent red rectangle in Figure \ref{fig:overview}. However, during test time, our method follows the conventional approach, namely the RoIs from RPN are used due to the fact that no ground truth information is available during testing.

\subsection{Connection Between $\mathrm{genRoIs}()$ and $\mathrm{generateBB}()$}
As described in the text, $\mathrm{generateBB}()$ is a low-level function and any approach uses generated BBs approach is to rely on this function. That`s why it is a subroutine of $\mathrm{genRoIs}()$. The main idea in our implementation is to generate bounding boxes by iteratively calling the $\mathrm{generateBB}()$ for $RoINum$ times.

Apart from $RoINum$, the number of RoIs to be generated, there are two main input sets to the $\mathrm{genRoIs}$ function. Firstly, $GTs$ and $perGTRoI$ together have the information about the box coordinates and the number of RoIs to be generated from each ground truth box. Therefore, for $i^{th}$ ground truth box ($GTs_i$), we call $\mathrm{generateBB}()$ function for $perGTRoI_i$ times. And the second set of input comprises $\psi_{IoU}$ and $W_{IoU}$, which together have information about the weights of each IoU interval. Therefore, for determining an IoU for each box, we first generate $perGTRoI$ number of samples of IoU intervals using the multinomial distribution defined in $W_{IoU}$ and then, for each resulting interval, we again sample uniformly an IoU within its limits. These IoUs are clipped from $0.95$ in order to prevent the problems arising from the precision problem in the $\mathrm{samplePolygon}()$ acceptance process. This sampling strategy distributes the input IoUs over an interval evenly. Finally, we randomly shuffle this set of IoUs and associate them to the ground truths, which completes the generation of the ground truth and desired IoU pairs as the input of the $\mathrm{generateBB}()$ function.

\begin{table}
\caption{The configurations of $W_{IoU}$ for the different tables in the paper.}
 \centering
\resizebox{\columnwidth}{!}{
\begin{tabular}{|c|c|c|c|c|c|c|c|}
\hline
Table&RoI Source& $IoU=0.5$& $IoU=0.6$ & $IoU=0.7$ & $IoU=0.8$ & $IoU=0.9$\\
\hline
1&Right Skew&$0.02$&$0.10$&$0.20$&$0.30$&$0.38$\\
1&Balanced&$0.33$&$0.17$&$0.18$&$0.17$&$0.15$\\
1&Left Skew&$0.73$&$0.12$&$0.15$&$0.05$&$0$\\
4&Balanced, IoU=0.5&$0.33$&$0.17$&$0.18$&$0.17$&$0.15$\\
4&Balanced, IoU=0.6&$0$&$0.38$ &$0.20$&$0.22$&$0.20$\\
4&Balanced, IoU=0.7&$0$&$0$&$0.48$&$0.25$&$0.27$\\
4&Balanced, IoU=0.8&$0$&$0$&$0$&$0.64$&$0.36$\\
4&Balanced, IoU=0.9&$0$&$0$&$0$&$0$&$1$\\
  \hline
\end{tabular}
}
\label{table:Configurations}
 \end{table}
\subsection{Configurations of $W_{IoU}$}

The configurations of the $W_{IoU}$ (i.e. the distribution over $\psi_{IoU}=[0.5,0.6,0.7,0.8,0.9]$) used for the experiments is shown in Table \ref{table:Configurations}.


{\small
\bibliographystyle{ieee}
\bibliography{proposalbibliography}

\begin{thebibliography}{10}\itemsep=-1pt

\bibitem{CascadeRCNN}
Z.~Cai and N.~Vasconcelos.
\newblock Cascade {R-CNN:} {D}elving into high quality object detection.
\newblock In {\em Proceedings of the IEEE Conference on Computer Vision and
  Pattern Recognition (CVPR)}, 2018.

\bibitem{PrimeSample}
Y.~Cao, K.~Chen, C.~C. Loy, and D.~Lin.
\newblock {Prime Sample Attention in Object Detection}.
\newblock {\em arXiv}, 1904.04821, 2019.

\bibitem{TutorialMC}
A.~T. Cemgil.
\newblock {\em A Tutorial Introduction to Monte Carlo methods, Markov Chain
  Monte Carlo and Particle Filtering}.
\newblock Academic Press Library in Signal Processing, 2013.

\bibitem{MMdetection}
K.~Chen, J.~Wang, J.~Pang, Y.~Cao, Y.~Xiong, X.~Li, S.~Sun, W.~Feng, Z.~Liu,
  J.~Xu, Z.~Zhang, D.~Cheng, C.~Zhu, T.~Cheng, Q.~Zhao, B.~Li, X.~Lu, R.~Zhu,
  Y.~Wu, J.~Dai, J.~Wang, J.~Shi, W.~Ouyang, C.~C. Loy, and D.~Lin.
\newblock {MMDetection}: Open mmlab detection toolbox and benchmark.
\newblock {\em arXiv preprint arXiv:1906.07155}, 2019.

\bibitem{RFCN}
J.~Dai, Y.~Li, K.~He, and J.~Sun.
\newblock {R-FCN:} {O}bject detection via region-based fully convolutional
  networks.
\newblock In {\em Advances in Neural Information Processing Systems (NIPS)},
  2016.

\bibitem{PASCAL}
M.~Everingham, L.~Van~Gool, C.~K.~I. Williams, J.~Winn, and A.~Zisserman.
\newblock The pascal visual object classes (voc) challenge.
\newblock {\em International Journal of Computer Vision}, 88(2):303--338, 2010.

\bibitem{DSSD}
C.-Y. Fu, W.~Liu, A.~Ranga, A.~Tyagi, and A.~C. Berg.
\newblock {DSSD:} {D}econvolutional single shot detector.
\newblock {\em arXiv}, 1701.06659, 2017.

\bibitem{NASFPN}
G.~Ghiasi, T.~Lin, R.~Pang, and Q.~V. Le.
\newblock {NAS-FPN:} learning scalable feature pyramid architecture for object
  detection.
\newblock In {\em The IEEE Conference on Computer Vision and Pattern
  Recognition (CVPR)}, 2019.

\bibitem{FastRCNN}
R.~Girshick.
\newblock Fast {R-CNN}.
\newblock In {\em Proceedings of the IEEE International Conference on Computer
  Vision (ICCV)}, 2015.

\bibitem{RCNN}
R.~Girshick, J.~Donahue, T.~Darrell, and J.~Malik.
\newblock Rich feature hierarchies for accurate object detection and semantic
  segmentation.
\newblock In {\em Proceedings of the 2014 IEEE Conference on Computer Vision
  and Pattern Recognition (CVPR)}, 2014.

\bibitem{ImbalanceBook}
I.~Goodfellow, Y.~Bengio, and A.~Courville.
\newblock {\em Learning from Imbalanced Data Sets}.
\newblock Springer, 2018.

\bibitem{GAN}
I.~Goodfellow, J.~Pouget-Abadie, M.~Mirza, B.~Xu, D.~Warde-Farley, S.~Ozair,
  A.~Courville, and Y.~Bengio.
\newblock Generative adversarial nets.
\newblock In {\em Advances in Neural Information Processing Systems (NIPS)},
  2014.

\bibitem{Resnet}
K.~He, X.~Zhang, S.~Ren, and J.~Sun.
\newblock Deep residual learning for image recognition.
\newblock In {\em Proceedings of the IEEE Conference on Computer Vision and
  Pattern Recognition (CVPR)}, 2016.

\bibitem{ImbalanceSurvey1}
J.~M. Johnson, Khoshgoftaar, and T.~M.
\newblock Survey on deep learning with class imbalance.
\newblock {\em Journal of Big Data}, 2019.

\bibitem{PyramidReconfiguration}
T.~Kong, F.~Sun, W.~Huang, and H.~Liu.
\newblock Deep feature pyramid reconfiguration for object detection.
\newblock In {\em European Conference on Computer Vision (ECCV)}, 2018.

\bibitem{OpenImages}
A.~Kuznetsova, H.~Rom, N.~Alldrin, J.~R.~R. Uijlings, I.~Krasin,
  J.~Pont{-}Tuset, S.~Kamali, S.~Popov, M.~Malloci, T.~Duerig, and V.~Ferrari.
\newblock The open images dataset {V4:} unified image classification, object
  detection, and visual relationship detection at scale.
\newblock {\em arXiv}, 1811.00982, 2018.

\bibitem{gradientharmonizing}
B.~Li, Y.~Liu, and X.~Wang.
\newblock Gradient harmonized single-stage detector.
\newblock In {\em AAAI Conference on Artificial Intelligence}, 2019.

\bibitem{relay2016}
S.~Li, L.~Zhouche, and H.~Qingming.
\newblock {Relay Backpropagation for Effective Learning of Deep Convolutional
  Neural Networks}.
\newblock In {\em European Conference on Computer Vision (ECCV)}, 2016.

\bibitem{FeaturePyramidNetwork}
T.~Lin, P.~Doll{\'{a}}r, R.~B. Girshick, K.~He, B.~Hariharan, and S.~J.
  Belongie.
\newblock Feature pyramid networks for object detection.
\newblock In {\em Proceedings of the IEEE Conference on Computer Vision and
  Pattern Recognition (CVPR)}, 2017.

\bibitem{FocalLoss}
T.~Lin, P.~Goyal, R.~B. Girshick, K.~He, and P.~Doll{\'{a}}r.
\newblock Focal loss for dense object detection.
\newblock In {\em Proceedings of the IEEE International Conference on Computer
  Vision (ICCV)}, 2017.

\bibitem{COCO}
T.-Y. Lin, M.~Maire, S.~Belongie, J.~Hays, P.~Perona, D.~Ramanan,
  P.~Doll{\'{a}}r, and C.~L. Zitnick.
\newblock {Microsoft COCO: Common Objects in Context}.
\newblock In {\em European Conference on Computer Vision (ECCV)}, 2014.

\bibitem{PANet}
S.~Liu, L.~Qi, H.~Qin, J.~Shi, and J.~Jia.
\newblock Path aggregation network for instance segmentation.
\newblock In {\em The IEEE Conference on Computer Vision and Pattern
  Recognition (CVPR)}, 2018.

\bibitem{SSD}
W.~Liu, D.~Anguelov, D.~Erhan, C.~Szegedy, S.~E. Reed, C.~Fu, and A.~C. Berg.
\newblock {SSD:} single shot multibox detector.
\newblock In {\em European Conference on Computer Vision (ECCV)}, 2016.

\bibitem{LRP}
K.~Oksuz, B.~C. Cam, E.~Akbas, and S.~Kalkan.
\newblock Localization recall precision {(LRP)}: A new performance metric for
  object detection.
\newblock In {\em European Conference on Computer Vision (ECCV)}, 2018.

\bibitem{imbalance}
K.~Oksuz, B.~C. Cam, S.~Kalkan, and E.~Akbas.
\newblock {Imbalance Problems in Object Detection: A Review}.
\newblock {\em arXiv e-prints}, page arXiv:1909.00169, Aug 2019.

\bibitem{FgClassImbalance}
W.~Ouyang, X.~Wang, C.~Zhang, and X.~Yang.
\newblock Factors in finetuning deep model for object detection with long-tail
  distribution.
\newblock {\em 2016 IEEE Conference on Computer Vision and Pattern Recognition
  (CVPR)}, 2016.

\bibitem{LibraRCNN}
J.~Pang, K.~Chen, J.~Shi, H.~Feng, W.~Ouyang, and D.~Lin.
\newblock Libra {R-CNN:} {T}owards balanced learning for object detection.
\newblock In {\em Proceedings of the IEEE Conference on Computer Vision and
  Pattern Recognition (CVPR)}, 2019.

\bibitem{YOLO}
J.~Redmon, S.~K. Divvala, R.~B. Girshick, and A.~Farhadi.
\newblock You only look once: Unified, real-time object detection.
\newblock In {\em Proceedings of the IEEE Conference on Computer Vision and
  Pattern Recognition (CVPR)}, 2016.

\bibitem{YOLObetter}
J.~Redmon and A.~Farhadi.
\newblock {YOLO9000:} {B}etter, faster, stronger.
\newblock In {\em Proceedings of the IEEE Conference on Computer Vision and
  Pattern Recognition (CVPR)}, 2017.

\bibitem{FasterRCNN}
S.~Ren, K.~He, R.~Girshick, and J.~Sun.
\newblock Faster {R-CNN:} {T}owards real-time object detection with region
  proposal networks.
\newblock {\em IEEE Transactions on Pattern Analysis and Machine Intelligence},
  39(6):1137--1149, 2017.

\bibitem{OHEM}
A.~Shrivastava, A.~Gupta, and R.~Girshick.
\newblock Training region-based object detectors with online hard example
  mining.
\newblock In {\em Proceedings of the IEEE Conference on Computer Vision and
  Pattern Recognition (CVPR)}, 2016.

\bibitem{AFastRCNN}
X.~Wang, A.~Shrivastava, and A.~Gupta.
\newblock A-fast-rcnn: Hard positive generation via adversary for object
  detection.
\newblock In {\em Proceedings of the IEEE Conference on Computer Vision and
  Pattern Recognition (CVPR)}, 2017.

\bibitem{Resnext}
S.~Xie, R.~B. Girshick, P.~Doll{\'{a}}r, Z.~Tu, and K.~He.
\newblock Aggregated residual transformations for deep neural networks.
\newblock {\em arXiv}, 1611.05431, 2016.

\bibitem{jjfaster2rcnn}
J.~Yang, J.~Lu, D.~Batra, and D.~Parikh.
\newblock A faster pytorch implementation of faster r-cnn.
\newblock {\em https://github.com/jwyang/faster-rcnn.pytorch}, Last Accessed:
  24 April 2019.

\end{thebibliography}
}


\end{document}